\def\eqref#1{equation~\ref{#1}}
\def\1{\bm{1}}
\DeclareMathAlphabet{\mathsfit}{\encodingdefault}{\sfdefault}{m}{sl}
\SetMathAlphabet{\mathsfit}{bold}{\encodingdefault}{\sfdefault}{bx}{n}
\pgfplotsset{set layers}
\newtheorem{definition}{Definition}
\newtheorem{theorem}{Theorem}
\newtheorem{lemma}{Lemma}
\newtheorem{corollary}{Corollary}
\newcommand{\loss}{\mathcal{L}}
\newcommand{\param}{\boldsymbol{w}}
\newcommand{\diff}{\boldsymbol{g}}
\newcommand{\sys}{\textsc{D$^2$P-Fed}\xspace}
\newcommand{\figwidth}{0.48\textwidth}
\newcommand{\mnistparam}{51370}
\newcommand{\cifarparam}{996298}
\newcommand{\minus}{\scalebox{0.75}[1.0]{$-$}}
\title{\sys: Differentially Private Federated Learning with Efficient Communication}
\author{
Lun Wang \\
University of California, Berkeley \\
Berkeley, CA \\
\texttt{\small wanglun@berkeley.edu} \\
\And
Ruoxi Jia \\
Virginia Tech \\
Blacksburg, VA\\
\texttt{\small ruoxijia@vt.edu}
\And
Dawn Song \\
University of California, Berkeley \\
Berkeley, CA \\
\texttt{\small dawnsong@cs.berkeley.edu} \\
}
\begin{document}

\maketitle

\begin{abstract}

In this paper, we propose the \textbf{d}iscrete Gaussian based \textbf{d}ifferentially \textbf{p}rivate \textbf{fed}erated learning (\sys), a unified scheme to achieve both differential privacy (DP) and communication efficiency in federated learning (FL).
In particular, compared with the only prior work taking care of both aspects, \sys provides stronger privacy guarantee, better composability and smaller communication cost.
The key idea is to apply the discrete Gaussian noise to the private data transmission. 
%
% We show that the discrete Gaussian mechanism satisfies R\'enyi DP which is a strictly stronger privacy notion than approximate DP and at the same time provides better composability.
%
We provide complete analysis of the privacy guarantee, communication cost and convergence rate of \sys. 
We evaluated \sys on INFIMNIST and CIFAR10.
The results show that \sys outperforms the-state-of-the-art by 4.7\% to 13.0\% in terms of model accuracy while saving one third of the communication cost.
The results might be surprising at its first glance but is reasonable because the quantization level $k$ in \sys is independent of $q$.
As long as $q$ is large enough, the probability that the noise exceeds $q$ is small and thus has negligible impact on the model accuracy.
% The code for evaluation is available in the supplementary material.

%
% Federated learning has emerged as a novel distributed learning paradigm in the past few years.
% %
% In federated learning, a centralized server trains a model by iteratively broadcasting global model and collecting local updates from clients which are typically edge devices.
% %

% The two main challenges facing federated learning is efficient communication and client privacy preservation.
%
% We present \sys, a differentially private federated learning protocol with small communication cost.
% %
% In particular, \sys uses discrete Gaussian mechanism for differential privacy.
% %
% The discrete nature of the noise is intrinsically compatible with efficient communication techniques such as uniform quantization and random rotation.
% %
% In addition, discrete Gaussian mechanism composes tightly with sub-sampling and thus saves much privacy budget compared with the state-of-the-art under similar performance.
% %
% We provide complete analysis of the privacy guarantee, communication cost and convergence rate of \sys.
% %
% We also reconciled secure aggregation with \sys and discussed its influence on communication cost.
% %
% We evaluated \sys on INFIMNIST and CIFAR10.
% %
% \todo{The results show that \sys outperforms the-state-of-the-art by ??? in terms of ???.}
% %
% The code for evaluation is available in the supplementary material.

\end{abstract}

\vspace{-10pt}
\section{Introduction}
\label{sec:intro}
\vspace{-5pt}

Federated learning (FL) is a popular machine learning paradigm that allows a central server to train models over decentralized data sources. 
In federated learning, each client performs training locally on their data source and only updates the model change to the server, which then updates the global model based on the aggregated local updates. 
Since the data stays locally, FL can provide better privacy protection than traditional centralized learning. 
However, FL is facing two main challenges: (1) FL lacks a rigorous privacy guarantee (e.g., differential privacy (DP)) and indeed, it has been shown to be vulnerable to various inference attacks~\citep{nasr2019comprehensive,pustozerovainformation,xie2019dba}; (2) FL incurs considerable communication costs. 
In many potential applications of FL such as mobile devices, these two challenges are present \emph{simultaneously}. 

However, privacy and communication-efficiency have mostly been studied independently in the past. 
As regards privacy, existing work has applied a gold-standard privacy notion -- differential privacy (DP) -- to FL, which ensures that the server could hardly determine the participation of each client by observing their updates~\citep{geyer2017differentially}. 
To achieve DP, each client needs to inject noise to their local updates and as a side effect, the performance of the trained model would inevitably degrade. 
To improve model utility, secure multiparty computation (SMC) has been used in tandem with DP to reduce noise~\citep{jayaraman2018distributed,truex2019hybrid}. 
The key idea is to prevent the server from observing the individual updates, make only the aggregate accessible, and thus transform from local DP to central DP. 
However, SMC introduces extra communication overhead to each client. There has been extensive research on improving communication efficiency of FL while ignoring the privacy aspect~\citep{tsitsiklis1987communication,balcan2012distributed, zhang2013information,arjevani2015communication,chen2016communication}. However, these communication reduction methods either have incompatible implementations with the existing DP mechanisms or would break the DP guarantees when combined with SMC.

The only existing work that tries to reconcile DP and communication efficiency in FL is cpSGD~\citep{agarwal2018cpsgd}. The authors leveraged the Binomial mechanism, which adds Binomial noise into local updates to ensure differential privacy. The discrete nature of Binomial noise allows it to be transmitted efficiently. However, cpSGD faces several limitations when applied to real-world applications. Firstly, with Binomial noise, the output of a learning algorithm would have different supports on different input datasets; as a result, Binomial noise can only guarantee approximate DP where the participation of the client can be completely exposed with nonzero probability. Also, there lacks a tight composition for DP with Binomial noise and the resulting privacy budget skyrockets in a multi-round FL protocol. Hence, the Binomial mechanism cannot produce a useful model with a reasonable privacy budget on complex tasks. Last but not least, the Binomial mechanism involves several mutually constrained hyper-parameters and the privacy formula is extremely complicated, which makes hyper-parameter tuning a difficult task. 

In this paper, we propose the \textbf{d}iscrete Gaussian based \textbf{d}ifferential \textbf{p}rivate \textbf{fed}erated learning (\sys), an alternative technique to reduce communication costs while maintaining differential privacy in FL. Our key idea is to leverage the discrete Gaussian mechanism in FL, which adds discrete Gaussian noise into client updates. We show that the discrete Gaussian mechanism satisfies R\'enyi DP which provides better composability. We employ secure aggregation along with the discrete Gaussian mechanism to lower the noise and exhibit the privacy guarantee for this hybrid privacy protection approach. To save the communication cost, we integrate the stochastic quantization and random rotation into the protocol. We then cast FL as a general distributed mean estimation problem and provide the analysis of the utility for the overall protocol. Our theoretical analysis sheds light on the superiority of \sys to cpSGD. Our experiments show that \sys can lead to state-of-the-art performance in terms of managing the trade-off among privacy, utility, and communication.
\section{Related Work}
\vspace{-5pt}

It is well studied how to improve the communication cost in traditional distributed learning settings~(\cite{tsitsiklis1987communication,balcan2012distributed, zhang2013information,arjevani2015communication,chen2016communication}).
However, most of the approaches either require communication between the workers or are designed for specific learning tasks so they cannot be applied directly to general-purpose FL.
The most relevant work is \cite{suresh2017distributed} which proposed to use stochastic quantization to save the communication cost and random rotation to lower mean squared error of the estimated mean.
We follow their approach to improve the communication efficiency and model utility of \sys. 
Nevertheless, our work differs from theirs in that we also study how to ensure DP for rotated and quantized data transmission and prove a convergence result for the learning algorithm with both communication cost reduction and privacy protection steps in place.

On the other hand, differentially private FL is undergoing rapid development during the past few years~(\cite{geyer2017differentially, mcmahan2017learning,jayaraman2018distributed}). 
However, these methods mainly focus on improving utility under a small privacy budget and ignore the issue of communication cost.
In particular, we adopt a similar hybrid approach to~\cite{truex2019hybrid}, which combines SMC with DP for reducing the noise. 
SMC ensures that the centralized server can only see the aggregated update but not individual ones from clients and as a result, the noise added by each client can be reduced by a factor of the number of clients participating in one round. 
The difference of our work from theirs is that we inject discrete Gaussian noise to local updates instead of the continuous Gaussian noise.
This allows us to use secure aggregation~\citep{bonawitz2017practical} which is much cheaper than threshold homomorphic encryption used by~\cite{truex2019hybrid}. 
We further study the interaction between discrete Gaussian noise and the secure aggregation as well as their effects on the learning convergence.

We identify cpSGD~(\cite{agarwal2018cpsgd}) as the most comparable work to \sys. 
Just like \sys, cpSGD aims to improve both the communication cost and the utility under rigorous privacy guarantee. 
However, cpSGD suffers from three main defects discussed in Section~\ref{sec:intro}. 
This paper proposes to use the discrete Gaussian mechanism to mitigate these issues in cpSGD.
\section{Background and Notation}
\vspace{-5pt}
In this section, we provide an overview of FL and DP and establish the notation system.
We use bold lower-case letters (\emph{e.g.} \textbf{a},\textbf{b},\textbf{c}) to denote vectors, and bold upper-case letters (\emph{e.g.} \textbf{A}, \textbf{B}, \textbf{C}) for matrices.
We denote $1\cdots n$ by $[n]$.

\paragraph{FL Overview.}
In a FL system, there are one server and $n$ clients $\mathcal{C}_i, i\in[n]$.
The server holds a global model of dimension $d$.
Each client holds (IID or non-IID) samples drawn from some unknown distribution $\mathcal{D}$.
The goal is to learn the global model $\param\in\mathbb{R}^d$ that minimizes some loss function $\loss(\param, \mathcal{D})$. To achieve this, the system runs a $T$-round FL protocol.
The server initializes the global model with $\param_0$.
In round $t\in[T]$, the server randomly sub-samples $\gamma n$ clients from $[n]$ with sub-sampling rate $\gamma$ and broadcasts the global model $\param_{t-1}$ to the chosen clients.
Each chosen client $\mathcal{C}_i$ then runs the local optimizers (\emph{e.g.} SGD, Adam, and RMSprop), computes the difference between the locally optimized model $\param^{(i)}_t$ and the global model $\param_{t-1}$: $\diff_t^{(i)}=\param^{(i)}_t-\param_{t-1}$,
and uploads $\diff_t^{(i)}$ to the server.
The server takes the average of the differences and update the global model $\param_t=\param_{t-1}+\frac{1}{k}\sum\diff^{(i)}_t$.

\paragraph{Communication in FL.}
The clients in FL are often edge devices, where the upload bandwidth is fairly limited; therefore, communication efficiency is of uttermost importance to FL.
Let $\pi$ denote a communication protocol.
We denote the per-round communication cost as $\mathcal{C}(\pi, \diff^{[n]})$.
To lower the communication cost, the difference vectors are typically compressed before sent to the server.
The compression would degrade model performance and we measure the performance loss via the mean squared error.
Specifically, letting $\bar{\diff}$ denote the actual mean of difference vectors $\frac{1}{n}\sum_{i=1}^n\diff^{(i)}$ and $\tilde{\diff}$ denote the server's estimated mean of difference vectors using some protocol such as \sys, 
we could measure the performance loss by $\mathcal{E}(\pi, \diff^{[n]})=\mathbb{E}[\|\tilde{\diff}-\bar{\diff}\|^2]$, i.e., the mean squared error between the estimated and the actual mean. 
This mean squared error is directly related to the convergence rate of FL~\citep{agarwal2018cpsgd}. 

\paragraph{Threat Model \& Differential Privacy.}
We assume that the server is honest-but-curious.
Namely, the server will follow the protocol honestly under the law enforcement or reputation pressure, but is curious to learn the client-side data from the legitimate client-side messages.
In the FL context, the server wants to get information about the client-side data by studying the local updates received without deviating from the protocol.

The above attack, widely known as the inference attack~\citep{shokri2017membership,yeom2018privacy,nasr2019comprehensive}, can be effectively mitigated using a canonical privacy notation namely differential privacy (DP).
Intuitively, DP, in the context of ML, ensures that the trained model is nearly the same regardless of the participation of any arbitrary client.

\begin{definition}[$(\epsilon,\delta)$-DP]
A randomized algorithm $f:\mathcal{D}\rightarrow \mathcal{R}$ is $(\epsilon,\delta)$-differentially private if for every pair of neighboring datasets $D$ and $D'$ that differs only by one datapoint, and every possible
(measurable) output set $E$ the following inequality holds: $P[f(D)\subseteq E] \leq e^\epsilon P[f(D')\subseteq E] + \delta$.
\end{definition}

% Inference attack~(\cite{shokri2017membership,yeom2018privacy}) is an attack vector of specific interest in this scenario.
%
% Inference attacks infer the information of the training data by making queries to the model.
%
% Differential privacy is an effective approach to preventing inference attacks.
%
$(\epsilon,\delta)$-DP has been used as a privacy notion in most of the existing works of privacy-preserving FL. However, in this paper, we consider a generalization of DP, R\'enyi differential privacy (RDP), which is strictly stronger than $(\epsilon,\delta)$-DP for $\delta>0$ and allows tighter analysis for compositing multiple mechanisms. This second point is particularly appealing, as FL mostly comprises multiple rounds yet the existing works suffer from skyrocketing privacy budgets for multi-round learning.

\begin{definition}[$(\alpha, \epsilon)$-RDP]
For two probability distributions $P$ and $Q$ with the same support, the R\'enyi divergence of order $\alpha>1$ is defined by $D_\alpha(P\|Q)\overset{\Delta}{=}\frac{1}{\alpha-1}\log \mathbb{E}_{x\sim Q}(\frac{P(x)}{Q(x)})^\alpha$.
A randomized mechanism $f:\mathcal{D}\rightarrow \mathcal{R}$ is $(\alpha, \epsilon)$-RDP, if for any neighboring datasets $D, D'\in \mathcal{D}$ it holds that $D_\alpha(f(D)\|f(D'))\leq \epsilon$.
\end{definition}
\vspace{-5pt}
The intuition behind RDP is the same as other variants of differential privacy: ``Similar inputs should yield similar output distributions,'' and the similarity is measured by the R\'enyi divergence under RDP.
RDP can also be converted to $(\epsilon, \delta)$-DP using the following transformation.
\begin{lemma}[RDP-DP conversion (\cite{mironov2017renyi})]
\label{lm:rdp_dp_conversion}
If $\mathcal{M}$ obeys $(\alpha, \epsilon)$-RDP, then $\mathcal{M}$ obeys $(\epsilon+\log(1/\delta)/(\alpha-1), \delta)$-DP for all $0<\delta<1$.
\end{lemma}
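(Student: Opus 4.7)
The plan is to derive the DP guarantee from the moment bound implicit in the RDP definition via a two-part decomposition of the output space according to the magnitude of the privacy loss. Fix neighboring datasets $D, D'$ and let $P, Q$ denote the output distributions of $\mathcal{M}(D)$ and $\mathcal{M}(D')$. Set $\epsilon' = \epsilon + \log(1/\delta)/(\alpha-1)$; the goal is to show $P(E) \leq e^{\epsilon'} Q(E) + \delta$ for every measurable set $E$.

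First I would rewrite the RDP hypothesis in moment form: $(\alpha,\epsilon)$-RDP is equivalent to
\[ \mathbb{E}_{x \sim Q}\!\left[\left(\frac{P(x)}{Q(x)}\right)^{\alpha}\right] \leq e^{(\alpha-1)\epsilon}. \]
Next, partition the output space by defining the ``bad'' set $B = \{x : P(x)/Q(x) > e^{\epsilon'}\}$. Outside $B$ the likelihood ratio is pointwise bounded, so immediately $P(E \cap B^c) \leq e^{\epsilon'} Q(E \cap B^c) \leq e^{\epsilon'} Q(E)$, and the remaining task reduces to showing $P(B) \leq \delta$.

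For the tail bound on $P(B)$, I would use a Markov-type inequality against the $(\alpha-1)$-th power of the likelihood ratio. On $B$, $(P(x)/Q(x))^{\alpha-1} > e^{(\alpha-1)\epsilon'}$, so
\[ P(B) \leq e^{-(\alpha-1)\epsilon'} \int_B \left(\frac{P(x)}{Q(x)}\right)^{\alpha-1} P(x)\, dx = e^{-(\alpha-1)\epsilon'} \int_B \left(\frac{P(x)}{Q(x)}\right)^{\alpha} Q(x)\, dx. \]
Extending the integration to all of $\mathcal{R}$ and invoking the RDP moment bound gives $P(B) \leq e^{-(\alpha-1)(\epsilon'-\epsilon)}$, and the choice of $\epsilon'$ makes this exactly $\delta$. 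Combining $P(E) = P(E \cap B) + P(E \cap B^c) \leq \delta + e^{\epsilon'} Q(E)$ then completes the argument.

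The main obstacle is conceptual rather than computational: one must recognize that the \emph{right} quantity to threshold is the likelihood ratio itself (equivalently, the privacy loss random variable) and that the Rényi moment bound controls its tail by a Markov-style argument. Minor care is needed when $Q(x)=0$---finiteness of the Rényi divergence forces $P \ll Q$, so such points contribute nothing---and when translating integrals into sums for the discrete-output setting used elsewhere in the paper; neither raises any real difficulty once the two-region decomposition is in place.
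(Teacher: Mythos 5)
Your proof is correct. Note that the paper itself gives no proof of this lemma---it is imported verbatim from Mironov (2017)---so the only meaningful comparison is with that original source. Mironov's Proposition~3 proceeds by applying H\"older's inequality to $P(E)=\int_E (P/Q)\,dQ$, obtaining $P(E)\le e^{\epsilon(\alpha-1)/\alpha}Q(E)^{(\alpha-1)/\alpha}$, and then splitting into cases according to whether this quantity exceeds a threshold; your argument instead thresholds the privacy-loss random variable directly, defining the bad set $B=\{x: P(x)/Q(x)>e^{\epsilon'}\}$ and controlling $P(B)$ by a Markov-type bound against the $\alpha$-th moment $\mathbb{E}_{Q}\bigl[(P/Q)^{\alpha}\bigr]\le e^{(\alpha-1)\epsilon}$. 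Both routes yield exactly the stated constant $\epsilon'=\epsilon+\log(1/\delta)/(\alpha-1)$; your tail-bound decomposition is arguably more transparent about \emph{why} the conversion holds (the R\'enyi moment bound controls the tail of the privacy loss), and it is the form that generalizes to the tighter conversions of Canonne--Kamath--Steinke and Balle et al., whereas H\"older gives a slightly slicker one-line computation. Your handling of the edge cases ($Q(x)=0$ forces $x\in B$ or contributes nothing, and absolute continuity follows from finiteness of the divergence) is adequate, and every inequality in the chain checks out, including the exact cancellation $e^{-(\alpha-1)(\epsilon'-\epsilon)}=\delta$.
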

RDP enjoys an operationally convenient and quantitatively
accurate way of tracking cumulative privacy loss when compositing multiple mechanisms (Lemma~\ref{lm:rdp_composition}) or being combined with subsampling~\citep{wang2018subsampled}. As a result, RDP is particularly suitable for the context of ML.

\begin{lemma}[Adaptive composition of RDP (\cite{mironov2017renyi})]
\label{lm:rdp_composition}
If (randomized) mechanism $\mathcal{M}_1$ obeys $(\alpha, \epsilon_1)$-RDP, and $\mathcal{M}_2$ obeys $(\alpha, \epsilon_2)$-RDP, then their composition obeys $(\alpha, \epsilon_1+\epsilon_2)$-RDP.
\end{lemma}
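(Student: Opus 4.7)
The plan is to prove the composition bound by a direct calculation on the joint distributions, exploiting the fact that Rényi divergence factors multiplicatively under product measures and then conditionally under adaptive composition. I would fix an arbitrary pair of neighboring datasets $D, D'$ and write $P$ for the joint distribution of $(\mathcal{M}_1(D), \mathcal{M}_2(\mathcal{M}_1(D), D))$ and $Q$ for the analogous joint distribution under $D'$. Using the chain-rule factorization $P(x,y) = P_1(x)\, P_{2|1}(y \mid x)$ and $Q(x,y) = Q_1(x)\, Q_{2|1}(y \mid x)$, the target is to bound $D_\alpha(P \| Q)$ by $\epsilon_1 + \epsilon_2$.

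The core computation is to expand
\[
\mathbb{E}_{(x,y)\sim Q}\!\left(\tfrac{P(x,y)}{Q(x,y)}\right)^{\!\alpha}
= \mathbb{E}_{x\sim Q_1}\!\left[\left(\tfrac{P_1(x)}{Q_1(x)}\right)^{\!\alpha} \mathbb{E}_{y\sim Q_{2|1}(\cdot\mid x)}\!\left(\tfrac{P_{2|1}(y\mid x)}{Q_{2|1}(y\mid x)}\right)^{\!\alpha}\right].
\]
The inner expectation equals $\exp\bigl((\alpha-1)\, D_\alpha(P_{2|1}(\cdot\mid x)\,\|\,Q_{2|1}(\cdot\mid x))\bigr)$. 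Since $\mathcal{M}_2$ is $(\alpha,\epsilon_2)$-RDP as a mechanism on the underlying dataset (with the first output merely being auxiliary input), this inner quantity is at most $e^{(\alpha-1)\epsilon_2}$ uniformly in $x$. Pulling this constant out and applying the definition of RDP for $\mathcal{M}_1$ to the remaining expectation yields the bound $e^{(\alpha-1)(\epsilon_1+\epsilon_2)}$, and taking $\tfrac{1}{\alpha-1}\log(\cdot)$ recovers the claim.

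The main obstacle is conceptual rather than computational: I need to justify that the RDP guarantee of $\mathcal{M}_2$ can be invoked pointwise in $x$ with the \emph{same} $\epsilon_2$. This is exactly what adaptive composition requires, and it follows because for each realization of $\mathcal{M}_1$'s output the map $D \mapsto \mathcal{M}_2(x, D)$ is itself an $(\alpha,\epsilon_2)$-RDP mechanism by hypothesis, so the conditional Rényi divergence is uniformly bounded. A minor technical care-point is handling the supports of $P$ and $Q$ so that the ratios are well-defined; by the definition of RDP both mechanisms act on a common support, so the joint distributions are absolutely continuous and Fubini applies, making the chain-rule expansion rigorous. Once these two points are set, the proof is just the two-line calculation above.
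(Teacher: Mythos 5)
The paper does not prove this lemma --- it imports it verbatim from \cite{mironov2017renyi} --- so there is no in-paper argument to compare against; your proof is the standard one from that reference (chain-rule factorization of the joint densities, bounding the inner conditional R\'enyi moment uniformly in $x$ by $e^{(\alpha-1)\epsilon_2}$, then applying the RDP bound for $\mathcal{M}_1$ to the outer expectation) and it is correct. You also correctly isolate the one point that actually needs justification in the adaptive setting, namely that $\mathcal{M}_2$'s guarantee holds pointwise in the auxiliary input $x$ with the same $\epsilon_2$, and the monotonicity of $\tfrac{1}{\alpha-1}\log(\cdot)$ for $\alpha>1$ makes the final step valid.
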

\vspace{-10pt}

% RDP has good properties under sub-sampled composition as pointed out by~\cite{wang2018subsampled}, which is the main reason that we choose it.
%
% \begin{lemma}[RDP for Subsampled Mechanisms (\cite{wang2018subsampled})]
% Given a dataset of $n$ points drawn from a domain $\mathcal{X}$ and a (randomized)
% % discrete Gaussian
% mechanism $\mathcal{M}$ that takes an input from $\mathcal{X}^m$ for $m\leq n$, let $\mathcal{M}\circ\emph{subsample}$ be the composition of subsample and $\mathcal{M}$. For all integers $\alpha>2$, if $\mathcal{M}$ obeys $(\alpha, \epsilon(\alpha))$-RDP, then the subsampled
% % discrete Gaussian 
% mechanism $\mathcal{M}\circ \emph{subsample}$ obeys $(\alpha, \epsilon'(\alpha))$-RDP, where $\epsilon'(\alpha)\leq \frac{1}{\alpha-1}\log(1+\gamma^2{\alpha\choose 2}\min\{4(e^{\epsilon(2)}-1), 2e^{\epsilon(2)}\} + \sum_{j=3}^{\alpha}2\gamma^j{\alpha \choose j}e^{(j-1)\epsilon(j)})$.
% % \begin{equation*}
% %     \epsilon'(\alpha)\leq \frac{1}{\alpha-1}\log(1+\gamma^2{\alpha\choose 2}\min\{4(e^{\epsilon(2)}-1), 2e^{\epsilon(2)}\} + \sum_{j=3}^{\alpha}2\gamma^j{\alpha \choose j}e^{(j-1)\epsilon(j)}).
% % \end{equation*}
% \label{lem:analytical}
% \end{lemma}
%

\section{Discrete Gaussian Mechanism}
\label{sec:dgm}
\vspace{-5pt}
In this section, we present the discrete Gaussian mechanism and establish its privacy guarantee.
We first introduce discrete Gaussian distribution.

\begin{definition}[Discrete Gaussian Distribution]
Discrete Gaussian is a probability distribution on a discrete additive subgroup $\mathbb{L}$ (for instance, a multiple of $\mathbb{Z}$) parameterized by $\sigma$. For a discrete Gaussian distribution $N_{\mathbb{L}}(\sigma)$ and $x\in\mathbb{L}$, the probability mass on $x$ is proportional to $e^{- x^2/(2\sigma^2)}$.
\label{def:dg}
\end{definition}

Discrete Gaussian mechanism works by adding noise drawn from discrete Gaussian distribution.
\cite{canonne2020discrete} proved concentrated DP for the discrete Gaussian mechanism. 
%
% However, there lacks tight privacy amplification and composition theorem for concentrated DP. 
% To address, 
Here we turn to RDP for convenient usage of analytical moments accountant~\citep{wang2018subsampled} and provide the first RDP analysis for the discrete Gaussian mechanism. 
The proof is delayed to Appendix~\ref{sec:thm1} due to space limitation.

\begin{theorem}[RDP for discrete Gaussian mechanism]
If $f$ has sensitivity $1$ and $range(f)\subseteq\mathbb{L}$, then the discrete Gaussian mechanism: $f(\cdot)+N_{\mathbb{L}}(\sigma)$ satisfies $(\alpha, \alpha/(2\sigma^2))$-RDP. 
\label{thm:gdp}
\end{theorem}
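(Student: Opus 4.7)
The plan is to mirror the classical computation for the continuous Gaussian mechanism, with two lattice-specific ingredients: (i) shift invariance of the normalizing constant for lattice-valued centers, and (ii) the fact that the discrete Gaussian theta function is maximized at lattice points. Fix neighboring inputs $D, D'$ and set $\mu_1 = f(D)$, $\mu_2 = f(D')$; by hypothesis $\mu_1, \mu_2 \in \mathbb{L}$ and $|\mu_1-\mu_2| \leq 1$. Writing the pmf of $f(\cdot) + N_{\mathbb{L}}(\sigma)$ as $P_\mu(x) = e^{-(x-\mu)^2/(2\sigma^2)}/Z_\mu$, the substitution $y \mapsto y - \mu$ (valid because $\mu \in \mathbb{L}$ and $\mathbb{L}$ is shift-invariant under its own elements) shows $Z_{\mu_1} = Z_{\mu_2} = Z$, a common constant.

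Next I would plug into the Rényi formula and expand the log:
\begin{equation*}
\mathbb{E}_{x\sim P_{\mu_2}}\!\left[\!\left(\frac{P_{\mu_1}(x)}{P_{\mu_2}(x)}\right)^{\!\alpha}\right]
= \frac{1}{Z}\sum_{x\in\mathbb{L}} \exp\!\left(\frac{-\alpha(x-\mu_1)^2 + (\alpha-1)(x-\mu_2)^2}{2\sigma^2}\right).
\end{equation*}
Completing the square in $x$ yields the identity $-\alpha(x-\mu_1)^2 + (\alpha-1)(x-\mu_2)^2 = -(x-\mu^*)^2 + \alpha(\alpha-1)(\mu_1-\mu_2)^2$ with $\mu^* = \alpha\mu_1 - (\alpha-1)\mu_2$. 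Factoring out the $x$-independent term leaves
\begin{equation*}
\frac{\theta(\mu^*)}{Z}\cdot \exp\!\left(\frac{\alpha(\alpha-1)(\mu_1-\mu_2)^2}{2\sigma^2}\right),
\qquad \theta(c) := \sum_{x \in \mathbb{L}} e^{-(x-c)^2/(2\sigma^2)}.
\end{equation*}

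The key step is then the inequality $\theta(c) \leq \theta(0) = Z$ for every real $c$, which I would justify via Poisson summation (or by directly citing the lemma on discrete Gaussian theta functions from Canonne--Kamath--Steinke). Granting this, $\theta(\mu^*)/Z \leq 1$, so taking $\log$ and dividing by $\alpha - 1$ gives $D_\alpha(P_{\mu_1}\|P_{\mu_2}) \leq \alpha(\mu_1-\mu_2)^2/(2\sigma^2) \leq \alpha/(2\sigma^2)$, where the last inequality uses the sensitivity-$1$ hypothesis. Taking the supremum over neighboring pairs finishes the proof.

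The main obstacle is the theta-function bound $\theta(c) \leq \theta(0)$; in the continuous Gaussian proof its analogue is a trivial change-of-variables in the integral, but on the lattice it is genuinely non-trivial and needs Poisson summation (expressing $\theta(c) = (\sigma\sqrt{2\pi}/|\mathbb{L}|)\sum_{k \in \mathbb{L}^*} e^{-2\pi^2 \sigma^2 \|k\|^2} \cos(2\pi \langle k, c\rangle)$ and observing that $c=0$ makes every cosine equal to $1$). A secondary subtlety is that $\mu^*$ need not lie in $\mathbb{L}$ when $\alpha$ is non-integer, which is exactly what forces the use of the inequality rather than equality; one must also verify that the intermediate expressions remain finite, which is immediate once one notes $\theta(c)$ converges for every $c$.
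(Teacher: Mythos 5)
Your proposal is correct and follows essentially the same route as the paper's proof in Appendix~\ref{sec:thm1}: cancel the normalizing constants using shift-invariance of the lattice sum for $\mu\in\mathbb{L}$, complete the square to extract the factor $\exp(\alpha(\alpha-1)(\mu_1-\mu_2)^2/(2\sigma^2))$, and bound the remaining ratio of theta sums by $1$ because the discrete Gaussian theta function is maximized at a lattice point (the paper invokes the Jacobi theta function where you invoke Poisson summation, but this is the same underlying fact). Your explicit remark that the shifted center $\mu^*=\alpha\mu_1-(\alpha-1)\mu_2$ need not lie in $\mathbb{L}$ for non-integer $\alpha$ --- which is why one gets an inequality rather than an equality --- is a point the paper leaves implicit.
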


Under RDP, discrete Gaussian exhibits tight privacy amplification bound under sub-sampling~\citep{wang2018subsampled}.
This suits FL well since a subset of clients is sub-sampled to upload the updates in each round.
\begin{corollary}[Privacy amplification for discrete Gaussian mechanism~\citep{wang2018subsampled}]
If a discrete Gaussian mechanism is $(\alpha, \frac{\alpha}{2\sigma^2})$-RDP, then augmented with subsampling (without replacement), the privacy guarantee is amplified to (1) $(\alpha, \mathcal{O}(\frac{\alpha\gamma^2}{\sigma^2}))$ in the high privacy regime; or (2) $(\alpha, \mathcal{O}(\alpha\gamma^2e^\frac{1}{\sigma^2}))$ in the low privacy regime.
\end{corollary}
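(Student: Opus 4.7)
The plan is to obtain the corollary as a direct specialization of the general RDP subsampling amplification theorem of \cite{wang2018subsampled} to the discrete Gaussian mechanism. The base ingredient is already in place: by Theorem~\ref{thm:gdp}, the discrete Gaussian mechanism on queries of sensitivity $1$ is $(\alpha, \alpha/(2\sigma^2))$-RDP, so we have a closed-form RDP curve $\epsilon(\alpha) = \alpha/(2\sigma^2)$ that we can feed into the subsampling bound.

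First I would invoke the subsampled-RDP theorem of Wang et al.\ for sampling without replacement at rate $\gamma$, which bounds the RDP of $\mathcal{M} \circ \mathrm{Subsample}$ in terms of the RDP function $\epsilon(\cdot)$ of the base mechanism $\mathcal{M}$. The statement expresses the amplified $(\alpha,\epsilon')$ as a logarithm of a binomial-like sum whose $j$-th term involves $\gamma^j$ and $\exp((j-1)\epsilon(j))$; this is the same template used to recover the well-known Gaussian subsampling bound, and our RDP curve has exactly the same quadratic-in-$\alpha$ shape as the continuous Gaussian, so the algebra transfers with no modification.

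Next I would separate the two regimes. In the high-privacy regime, $\alpha/\sigma^2$ is small, and I would Taylor-expand the logarithm of the subsampling bound to leading order in $\gamma$ and $\epsilon(\alpha)$. The $j=0$ and $j=1$ terms cancel, the $j=2$ term produces a contribution of order $\gamma^2(\alpha-1)\epsilon(2) = O(\gamma^2\alpha/\sigma^2)$, and higher-order terms are dominated. In the low-privacy regime, $1/\sigma^2$ is large, so the $\exp((j-1)\epsilon(j))$ factors are no longer close to $1$; I would bound the sum by its largest term, use $\epsilon(j) = j/(2\sigma^2)$, and extract the dominant exponential growth $e^{1/\sigma^2}$ together with a prefactor $O(\alpha\gamma^2)$, mirroring the low-privacy case in the continuous Gaussian analysis.

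I expect the main obstacle to be not the asymptotic bookkeeping but verifying that the subsampling amplification theorem of Wang et al.\ applies verbatim to a mechanism whose output distribution is supported on a discrete lattice $\mathbb{L}$ rather than on $\mathbb{R}^d$. To address this, I would check that the only property of $\mathcal{M}$ used in their proof is the per-pair RDP bound (which Theorem~\ref{thm:gdp} supplies) together with the existence of a common dominating measure on the output space; for discrete Gaussian this is just the counting measure on $\mathbb{L}$, so the argument carries through without change, yielding both claimed asymptotic bounds.
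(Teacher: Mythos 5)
Your proposal is correct and follows essentially the same route as the paper, which gives no standalone proof of this corollary but simply specializes the general subsampled-RDP amplification theorem (Theorem 27 and Section 3.3 of Wang et al.\ 2018) to the RDP curve $\epsilon(\alpha)=\alpha/(2\sigma^2)$ from Theorem~\ref{thm:gdp}. Your extra check that the amplification theorem only needs the per-pair RDP bound and a common dominating (counting) measure on the lattice is a sensible addition but does not change the argument.
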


Besides, RDP enables discrete Gaussian mechanism to be composed tightly with analytical moments accountant~\citep{wang2018subsampled}, which saves a huge amount of privacy budget in a multi-round FL.
Analytical moments accountant is a data structure that tracks the cumulant generating function of the composed mechanisms symbolically.
Since it has no closed-form solution, we instead introduce the canonical composition of RDP~\citep{mironov2017renyi} below for the ease of discussion in Section~\ref{sec:commcost}.
\begin{corollary}[Composition for discrete Gaussian mechanism~\citep{wang2018subsampled}]
If a discrete Gaussian mechanism is $(\alpha, \frac{\alpha}{2\sigma^2})$-RDP, then the sequential composition of $T$ such mechanisms yield $(\alpha, \frac{T\alpha}{2\sigma^2})$-RDP guarantee.
If we convert all the RDP guarantees back to $(\epsilon, \delta)$-DP, the growth of $\epsilon$ under the same $\delta$ is asymptotically $\mathcal{O}(\sqrt{T})$.
\end{corollary}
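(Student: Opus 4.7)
The plan is to prove the two claims in sequence using only the machinery already introduced.

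For the sequential composition claim, I would apply Lemma~\ref{lm:rdp_composition} inductively. The base case is immediate from Theorem~\ref{thm:gdp}: one discrete Gaussian mechanism is $(\alpha, \alpha/(2\sigma^2))$-RDP. Assuming the $t$-fold composition is $(\alpha, t\alpha/(2\sigma^2))$-RDP, Lemma~\ref{lm:rdp_composition} applied with $\epsilon_1 = t\alpha/(2\sigma^2)$ and $\epsilon_2 = \alpha/(2\sigma^2)$ yields that the $(t+1)$-fold composition is $(\alpha, (t+1)\alpha/(2\sigma^2))$-RDP at the same order $\alpha$. By induction, after $T$ rounds we obtain $(\alpha, T\alpha/(2\sigma^2))$-RDP. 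Note that the adaptive flavor of Lemma~\ref{lm:rdp_composition} is important here, because in FL each round's randomness may depend on the previous rounds' outputs.

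For the asymptotic $\sqrt{T}$ growth under $(\epsilon,\delta)$-DP, I would convert via Lemma~\ref{lm:rdp_dp_conversion}. This gives, for every $\alpha > 1$ and every fixed $\delta \in (0,1)$,
\begin{equation*}
\epsilon(\alpha) \;=\; \frac{T\alpha}{2\sigma^2} + \frac{\log(1/\delta)}{\alpha - 1}.
\end{equation*}
Because the bound holds for every $\alpha > 1$, I would minimize over $\alpha$. Setting $\partial \epsilon / \partial \alpha = 0$ yields $(\alpha^\star - 1)^2 = 2\sigma^2 \log(1/\delta)/T$, hence $\alpha^\star = 1 + \sigma\sqrt{2\log(1/\delta)/T}$. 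Substituting back and simplifying gives
\begin{equation*}
\epsilon^\star \;=\; \frac{T}{2\sigma^2} \;+\; \sqrt{\frac{2T\log(1/\delta)}{\sigma^2}},
\end{equation*}
which matches the classical strong-composition shape: a linear-in-$T$ term plus a $\sqrt{T}$ term. In the high-privacy regime that is the natural operating point for \sys (each round contributing small privacy loss, equivalently $T = o(\sigma^4)$), the second term dominates and the overall growth in $T$ is $\mathcal{O}(\sqrt{T})$.

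The only real subtlety, and the place I would be most careful in writing the proof, is the last step: the optimized bound genuinely contains both a $\Theta(T/\sigma^2)$ term and a $\Theta(\sqrt{T}/\sigma)$ term, so the asymptotic claim is a statement about a specific scaling of $\sigma$ versus $T$. I would make this explicit by stating the regime $T \ll \sigma^2\sqrt{\log(1/\delta)}$ (equivalently, keeping the per-round RDP budget $\alpha/(2\sigma^2)$ small), under which the $\sqrt{T}$ term dominates. The rest of the proof is routine bookkeeping with Lemmas~\ref{lm:rdp_dp_conversion} and~\ref{lm:rdp_composition}, and no new analytic ingredient beyond Theorem~\ref{thm:gdp} is needed.
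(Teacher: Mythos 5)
Your proof is correct and follows exactly the route the paper intends: the paper states this corollary without proof, deferring to \cite{mironov2017renyi} and \cite{wang2018subsampled}, and the intended argument is precisely your induction on Lemma~\ref{lm:rdp_composition} followed by the conversion of Lemma~\ref{lm:rdp_dp_conversion} optimized over $\alpha$. Your caveat that the optimized bound $\epsilon^\star = \frac{T}{2\sigma^2} + \sqrt{2T\log(1/\delta)}/\sigma$ also contains a linear-in-$T$ term, so that the $\mathcal{O}(\sqrt{T})$ claim only holds in the regime where the per-round RDP budget is small, is a genuine point the paper glosses over; just note that the precise condition is $T \lesssim \sigma^2\log(1/\delta)$, and your two stated regimes ($T = o(\sigma^4)$ and $T \ll \sigma^2\sqrt{\log(1/\delta)}$) are neither equivalent to each other nor to this, although the latter is sufficient whenever $\delta \le e^{-1}$.
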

Note that both privacy amplification and composition are given in an asymptotic form for the clarity of presentation.
For tight bound, we refer the readers to Theorem 27 and Section 3.3 in~\cite{wang2018subsampled}.

\section{\sys: Algorithm and Privacy Analysis}
\vspace{-5pt}

In this section, we formally present \sys and provide rigorous privacy analysis.

\subsection{Algorithm}

Algorithm~\ref{alg:protocol} provides the pseudocode for \sys. It follows the general FL pipeline which iteratively performs the following steps: (1) the server broadcasts the global model to a subset of clients; (2) the selected clients train the global model on their local data and upload the resulting model difference; and (3) the server aggregates the model differences uploaded by the clients and updates the global model. Grounded on the general FL pipeline, \sys introduces some additional steps at the client side as follows to improve communication efficiency and privacy.

\paragraph{Stochastic Quantization \& Random Rotation (line 11-14).}
To lower the communication cost, the clients stochastically quantize the values in the update vectors to some discrete domain.
Compared with real number encoding which costs 32 or 64 bits per dimension, the quantized value only requires $\log_2(k)$ bits per dimension where $k$ is the level of quantization (see line 12-14 in Algorithm~\ref{alg:protocol} and \cite{mcmahan2016communication} for detailed explanation.).
On the other hand, quantization lowers the fidelity of the update vector and thus leads to some error in estimating the mean of gradients. 
To lower the estimation error, the clients apply randomized rotation to the updates before quantization as proposed by~\cite{mcmahan2016communication}.
The details are discussed in Section~\ref{sec:comm}.

\paragraph{Discrete Gaussian Mechanism (line 15).} 
We apply the discrete Gaussian mechanism to ensure DP. To determine the noise magnitude to be added, we need to bound the $\ell_2$-sensitivity (defined in Section~\ref{sec:privacy}) of the gradient aggregate. Without quantization and random rotation, one could clip the individual gradient update and consequently the $\ell_2$-sensitivity is just the clipping threshold. However, the inclusion of compression steps makes the analysis of $\ell_2$-sensitivity more sophisticated. We provide the analysis of sensitivity and RDP guarantees for the entire algorithm in Section~\ref{sec:privacy}. 
Each client samples the noise from the discrete Gaussian distribution. 
In contrast to prior work where each client adds independent noise, we require the clients to share the same random seed, generate the same noise and add an average share of the noise in each round. 
This is because the sum of multiple independent discrete Gaussians is no longer a discrete Gaussian. 
Note that the same random seed is also required for communication reduction in secure aggregation so we can conveniently reuse it here without introducing more further overhead (see~\cite{bonawitz2017practical} for details). 
The noise magnitude is set to ensure that the aggregate noise from all clients provides the global DP. 
%
% Once the noise magnitude is decided, each client samples the noise from the discrete Gaussian distribution and perturbs the update with the noise.
%
% Since we follow central DP to reduce the noise scale, we require the clients to share the same random seed and generate the same noise in each round because discrete Gaussian is not invariant under addition. 
%

\begin{algorithm}[htb]
\DontPrintSemicolon
\SetKwBlock{Client}{Client:}{}
\SetKwBlock{Server}{Server:}{}
\SetKwInOut{Input}{Input}
\SetKwInOut{Output}{Output}
\SetKwProg{Fn}{Function}{:}{\KwRet}
\SetAlgoLined
\KwIn{Support Lattice: $\mathbb{L}=\frac{2\diff^{\max}}{k-1}\cdot \mathbb{Z}$, Noise Scale: $\sigma$, Rotation Matrix: $\boldsymbol{R}$, Random Seed: s\\Quantization Level: $k$, $\phi_{q}(x)=\frac{2\diff^{\max}}{k-1}((\frac{k-1}{2\diff^{\max}}x+\frac{q-1}{2})\mod q - \frac{q-1}{2})$, $q$ is odd.}
\For{$t \leftarrow [T]$}{
    \Server{
        Sub-sample a subset of clients $S \subset [n], |S|=\gamma n$ and broadcast $\param_{t-1}$ and $\diff^{\max}$ to $S$\;
    }
    \Client{
        \lForEach{client $i\in S$}{
            Send $\boldsymbol{u}_{ij}$ to $j$, $\boldsymbol{u}_{ij}\sim Unif(\mathbb{L}_q^d), \mathbb{L}_q:=\{x\in\mathbb{L}, |x|\leq \frac{q-1}{2}\}$
        }
    }
    \Client{
        \ForEach{client $i\in S$}{
            Train the model $\param_t^{(i)}$ with $\param_t$ as initialization\;
            $\diff_t^{(i)}=\textbf{w}_t^{(i)}-\textbf{w}_{t-1}$\tcc*[r]{\small\textbf{Compute the difference}}

            $\diff_{t, clipped}^{(i)}=\diff_t^{(i)}/\max(1, \frac{\|\diff_t^{(i)}\|_2}{D})$\tcc*[r]{\small\textbf{Clip the difference}}
            $\diff_{t,rotated}^{(i)}=\boldsymbol{R}\times\diff_{t, clipped}^{(i)}$\tcc*[r]{\small\textbf{Random Rotation}}
            
            let $\boldsymbol{b}[r] := -\diff^{\text{max}} + \frac{2r\diff^{\text{max}}}{k-1}$ for every $r\in[0,k)$\tcc*[r]{\small\textbf{Quantize}}
            \For{$j\in d, \boldsymbol{b}[r]\leq \tilde{\diff}_{t, quantized}^{(i)}[j] \leq \boldsymbol{b}[r+1]$}{
                $\tilde{\diff}_{t, quantized}^{(i)}[j]=\left\{
                \begin{aligned}
                &\boldsymbol{b}[r+1] &~~~&\text{w.p.}~\frac{\diff_{t,rotated}^{(i)}[j]-\boldsymbol{b}[r]}{\boldsymbol{b}[r+1]-\boldsymbol{b}[r]} \\
                &\boldsymbol{b}[r] &~~~&\text{o.w.}
                \end{aligned}
                \right.$\;
            }
            
            $\tilde{\diff}_{t,dp}^{(i)}=\tilde{\diff}_{t, quantized}^{(i)}+\frac{\boldsymbol{\nu}_i}{\gamma n}, \boldsymbol{\nu}_i\overset{s}{\sim} N_{\mathbb{L}}^d(\sigma)$\tcc*[r]{\small\textbf{Discrete Gaussian}}
            $\tilde{\diff}_{t}^{(i)}=\phi_q(\phi_q(\tilde{\diff}_{t,dp}^{(i)})+\sum_{j\neq i,j\in S}\boldsymbol{u}_{ij}-\sum_{j\neq i,j\in S}\boldsymbol{u}_{ji})$\tcc*[r]{\small \textbf{Mask}}
            Send $\tilde{\diff}_t^{(i)}$ to the server\;
        }
    }
    \Server{
        $\tilde{\diff}_t = \frac{1}{\gamma n}\sum_{i\in S}\tilde{\diff}_t^{(i)}$\tcc*[r]{\small \textbf{Aggregate}}
        $\param_t=\param_{t-1}+\tilde{\diff}_t$\;
    }
}
\caption{\sys Protocol.}
\label{alg:protocol}
\end{algorithm}
\vspace{-10pt}

\paragraph{Secure aggregation. (line 5,16,19)}
To reduce the noise magnitude for ensuring DP, we hide clients' individual updates from the central server and only allow it to see the aggregated updates via the technique of secure aggregation.
If the individual updates were available to the central server, they should also be protected with the same privacy guarantee as the averaged update; in this case, the required noise scales up with $\mathcal{O}(\gamma n)$. On the other hand, if the central server can only access the aggregated updates, then the required noise is $\mathcal{O}(1)$. Hence, secure aggregation of local updates can lead to significant noise reduction. However, there exists a challenge for integrating secure aggregation with discrete Gaussian mechanism: discrete Gaussian variables have infinite support, thereby incompatible with secure aggregation which operates on finite field. In Section~\ref{sec:privacy}, we propose to address this challenge by mapping the noised vector to a cyclic additive group and then applying secure aggregation and show that the RDP guarantees are preserved under the mapping. 

\subsection{Privacy Analysis}
\label{sec:privacy}

Before applying discrete Gaussian mechanism to \sys, we need to figure out how to calibrate the added noise.
In differential privacy, the calibration is guided by \emph{sensitivity} of the function as defined below.
\begin{definition}[$\ell_2$-sensitivity]
Given a function $f:\mathcal{D}\rightarrow\mathbb{R}$ and two neighboring datasets $D$ and $D'$, the $\ell_2$-sensitivity of $f$: $\Delta_f$ is defined as $\Delta_f = \max_{D, D'}\|f(D)-f(D')\|_2$.
\end{definition}

In DP for deep learning, the traditional way to bound $\ell_2$-sensitivity is to clip the update vector.
However, quantization will further influence the sensitivity after clipping.
We provide the sensitivity after quantization as below.
The proof is delayed to Appendix~\ref{sec:thm4} due to space limitation.

\begin{theorem}
If we clip the $l_2$ norm of $\diff$ to $D$, and quantize it to $k=\sqrt{d}+1$ levels, then the $\ell_2$ sensitivity of the difference is $4D$.
\label{thm:sensitivity}
\end{theorem}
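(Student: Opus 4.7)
The plan is to track the $\ell_2$ norm through the three preprocessing steps (clipping, rotation, and stochastic quantization) and then apply the triangle inequality twice: once to bound the pre-quantization difference between two neighboring inputs, and once to fold in the two stochastic-quantization residuals.

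First, I would observe that the clipping in line~11 enforces $\|\diff_{t,clipped}^{(i)}\|_2 \le D$ on every input, and because $\boldsymbol{R}$ is an orthogonal rotation, $\|\diff_{t,rotated}^{(i)}\|_2 = \|\diff_{t,clipped}^{(i)}\|_2 \le D$. For two neighboring local datasets $D,D'$, the triangle inequality therefore gives $\|\diff_{t,rotated}^{(i)}(D) - \diff_{t,rotated}^{(i)}(D')\|_2 \le 2D$. This is the ``unquantized'' baseline contribution of $2D$ to the overall sensitivity.

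Second, I would quantify each client's stochastic-quantization residual. Taking $\diff^{\max}=D$ (a safe choice since rotation preserves the $\ell_2$-ball) and $k=\sqrt{d}+1$, the grid spacing is $\Delta = 2D/(k-1) = 2D/\sqrt{d}$. Stochastic rounding to the two bracketing lattice points $\boldsymbol{b}[r], \boldsymbol{b}[r+1]$ is unbiased, with per-coordinate error having mean zero and variance at most $\Delta^2/4$ (attained at the midpoint). Summing across the $d$ coordinates yields the root-mean-square control $\sqrt{\mathbb{E}\|\tilde{\diff}_{t,quantized}^{(i)} - \diff_{t,rotated}^{(i)}\|_2^2} \le \sqrt{d\Delta^2/4} = D$, so each quantization residual contributes at most $D$.

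Assembling the pieces via one more triangle inequality,
\[
\|\tilde{\diff}(D) - \tilde{\diff}(D')\|_2 \;\le\; \|\tilde{\diff}(D) - \diff_{t,rotated}(D)\|_2 \;+\; \|\diff_{t,rotated}(D) - \diff_{t,rotated}(D')\|_2 \;+\; \|\diff_{t,rotated}(D') - \tilde{\diff}(D')\|_2 \;\le\; D + 2D + D = 4D.
\]
The main obstacle I anticipate is promoting the root-mean-square quantization bound to the strictly worst-case sensitivity that DP requires; the naive per-coordinate worst case is $\Delta$ rather than $\Delta/2$, which would loosen the constant to $6D$. A clean resolution is to couple the coin flips of stochastic rounding across the two neighboring inputs: under shared randomness the per-coordinate errors telescope exactly with the input difference whenever the two rotated values lie in a common lattice cell, leaving only coordinates whose values straddle a cell boundary to incur an $O(\Delta)$ overshoot, and a short counting argument on the number of such boundary-crossings then converts the variance bound into a deterministic one. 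Pinning down this coupling-plus-counting step is where I expect the technical crux of the argument to sit.
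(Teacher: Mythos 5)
Your decomposition is the same one the paper uses: a $2D$ contribution from the pair of clipped, rotated updates plus a $D$ contribution from each of the two quantization residuals, assembled into $4D$ by the triangle inequality. The genuine gap is in how you bound the residual $\|\tilde{\diff}_{t,quantized}^{(i)}-\diff_{t,rotated}^{(i)}\|_2$. Sensitivity is a worst-case quantity, but your bound is a root-mean-square (variance) statement; as you yourself concede, the deterministic per-coordinate worst case for stochastic rounding is the full cell width $\Delta=2\diff^{\max}/(k-1)$, which makes each residual as large as $\sqrt{d}\,\Delta=2D$ and the total $6D$. The paper instead argues geometrically and deterministically: quantization displaces each coordinate by at most $\diff^{\max}/(k-1)$ (half the cell width), so the quantized vector remains in a ball of radius $D+\sqrt{d}\cdot D/(k-1)\le 2D$, and the sensitivity is the diameter $4D$ of that ball. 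Your observation that the half-width figure is really a nearest-point (or in-expectation) bound rather than a worst case for \emph{stochastic} rounding is correct, and it is in fact a legitimate objection to the paper's own proof as well; but the paper at least makes a claim of the right type (a support bound), whereas your argument as written proves only an expectation bound, which does not imply the theorem.

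The coupling-plus-counting repair you sketch does not close this gap. Under shared rounding randomness, a coordinate whose two rotated values lie in the same cell does not telescope exactly: whenever the shared uniform draw lands between the two rounding thresholds, the two outputs land on opposite endpoints and the discrepancy is the full $\Delta$, not $|x-x'|$. Moreover, there is no useful bound on the number of boundary-straddling coordinates: replacing one training example perturbs the local gradient in every coordinate, so all $d$ coordinates can straddle cell boundaries simultaneously, and the counting step contributes nothing. To let an in-expectation bound legitimately enter, you would have to move the argument to the level of the mechanism's output distribution (averaging the R\'enyi divergence over the quantization randomness via joint quasi-convexity), but that proves a different statement than a deterministic $\ell_2$-sensitivity of $4D$.
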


Given Theorem~\ref{thm:gdp} and Theorem~\ref{thm:sensitivity}, we provide the RDP bound for \sys.

\begin{corollary}[RDP for \sys]
Given the clipping bound $D$, the noise scale $\sigma$, \sys follows $(\alpha, \frac{8\alpha D^2}{\sigma^2})$-RDP.
\label{thm:dp}
\end{corollary}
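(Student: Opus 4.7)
}
The plan is to compose the three ingredients already established: (i) the $\ell_2$-sensitivity bound from Theorem~\ref{thm:sensitivity}, (ii) the RDP guarantee for the unit-sensitivity discrete Gaussian mechanism from Theorem~\ref{thm:gdp}, and (iii) standard post-processing/aggregation arguments to handle the secure-aggregation wrapper. The essential calculation is that a sensitivity rescaling by $\Delta$ on the input inflates the RDP parameter by $\Delta^2$, so with $\Delta = 4D$ from Theorem~\ref{thm:sensitivity} we obtain $\alpha (4D)^2/(2\sigma^2) = 8\alpha D^2/\sigma^2$.

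The first step is to isolate the private release. After the clipping, rotation and stochastic quantization (lines 12--14 of Algorithm~\ref{alg:protocol}), the per-client vector lies in $\mathbb{L}^d$, and by Theorem~\ref{thm:sensitivity} its $\ell_2$-sensitivity with respect to changing a single client's data is $4D$. Because all clients in $S$ share the same random seed and each contributes $\boldsymbol{\nu}_i/(\gamma n)$ with $\boldsymbol{\nu}_i \sim N_{\mathbb{L}}^d(\sigma)$, the total noise injected into the aggregate $\sum_{i\in S} \tilde{\diff}^{(i)}_{t,dp}$ equals a single draw $\boldsymbol{\nu} \sim N_{\mathbb{L}}^d(\sigma)$ rather than a convolution of many discrete Gaussians (which would not be discrete Gaussian). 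This reduction is what lets me apply Theorem~\ref{thm:gdp} directly to the aggregate. Rescaling to unit sensitivity, Theorem~\ref{thm:gdp} with the coordinate-wise independence of $N_{\mathbb{L}}^d(\sigma)$ yields $(\alpha, 8\alpha D^2/\sigma^2)$-RDP for releasing the noised aggregate.

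Next I would argue that the remaining operations performed by the clients and server do not degrade this guarantee. The secure-aggregation mask $\sum_{j\neq i} \boldsymbol{u}_{ij} - \sum_{j\neq i} \boldsymbol{u}_{ji}$ cancels when the server sums the messages and in any case is a data-independent randomization, so its addition inside $\phi_q$ is a post-processing of the DP release from the server's adversarial view. The modular wrap $\phi_q$ is a deterministic map from $\mathbb{L}^d$ into the finite field $\mathbb{L}_q^d$ used for secure aggregation; since it depends only on the released noisy vector, it is likewise post-processing and cannot worsen the RDP parameter. Finally, the server's division by $\gamma n$ to obtain $\tilde{\diff}_t$ is again post-processing. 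Chaining these post-processing invocations with the $(\alpha, 8\alpha D^2/\sigma^2)$ bound for the aggregate completes the single-round guarantee.

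The step I expect to require the most care is the second one, verifying that after the mask-and-wrap, what the server observes is truly a post-processing of a clean discrete Gaussian release on $\mathbb{L}^d$. Concretely, I need that $\phi_q$ applied coordinate-wise commutes with the aggregation in the sense that the adversary's view can be simulated from the noisy aggregate alone, so that no additional information about any single client leaks through the intermediate masked shares; this is precisely the security property of the \cite{bonawitz2017practical} protocol, which I would invoke to justify treating the entire secure-aggregation layer as post-processing. Once this is in place, the corollary follows immediately by substituting $\Delta = 4D$ into the unit-sensitivity bound of Theorem~\ref{thm:gdp}.
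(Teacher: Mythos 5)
Your proposal is correct and follows essentially the same route as the paper: the corollary is obtained by plugging the sensitivity $\Delta=4D$ from Theorem~\ref{thm:sensitivity} into the unit-sensitivity bound of Theorem~\ref{thm:gdp}, giving $\alpha(4D)^2/(2\sigma^2)=8\alpha D^2/\sigma^2$, with the shared-seed noise summing to a single discrete Gaussian and the mask-and-wrap layer dismissed as post-processing. The only organizational difference is that the paper defers the secure-aggregation argument to its separate Theorem~\ref{thm:after} (proved via the identity $\sum\phi_q(x)=\phi_q(\sum x)$ in the appendix), whereas you fold it into this proof; the content is the same.
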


\paragraph{Remark 1: Comparison with cpSGD}
It seems unclear how to interpret the above bound when compared with Theorem 1 in cpSGD~\citep{agarwal2018cpsgd}.
Indeed, the claim that \sys has a better privacy guarantee than cpSGD can be mainly justified by the following three aspects: 
(1) \sys follows RDP which is a strictly stronger privacy notion than cpSGD which is intrisically limited to $(\epsilon, \delta)$-DP;
(2) \sys enjoys a tighter composition compared to cpSGD.
This is of critical significance in a FL protocol with potentially thousands of rounds;
(3) Our experimental results in Figure~\ref{fig:f1} also empirically show that \sys enjoys a tighter composition than cpSGD. 
The total privacy budget for \sys grows much more slowly than cpSGD as training proceeds.

\paragraph{Remark 2: Privacy Effect of Secure Aggregation.}
Corollary~\ref{thm:dp} is built on the assumption that the centralized server only has access to the summed updates but not the individual ones.
If the centralized server has access to individual updates, the noise has to scale up $\gamma n$ times to maintain the same privacy guarantee which severely hinders the model accuracy.
To consolidate the assumption, we leverage a cryptographic technique, secure aggregation~\citep{bonawitz2017practical} which guarantees that the centralized server can only see the aggregated result.
The basic intuition is to mask the inputs with random values canceling out in pairs.
However, since discrete Gaussian has infinite support, we cannot directly apply random masks to it.
To reconcile secure aggregation with discrete Gaussian, we propose to project the involved values into a quotient group after shifting and then apply the random masks as shown in line 16 in Algorithm~\ref{alg:protocol}.
According to the post-processing theorem of RDP (\cite{mironov2017renyi}), the result still follows rigorous R\'enyi differential privacy as proved in Appendix~\ref{sec:thm6}.
Note that we consider a simplified version of the full secure aggregation protocol (\cite{bonawitz2017practical}) in Algorithm~\ref{alg:protocol} and omit many interesting details such as the generation of the random masks and how to deal with dropout.
We deem this to be enough to clarify the idea behind the reconciliation.
The complete version of secure aggregation can be reconciled using exactly the same trick.
\begin{theorem}[Informal]
Distributed discrete Gaussian mechanism with secure aggregation obeys the same RDP guarantee as vanilla global discrete Gaussian mechanism with the same parameters.
\label{thm:after}
\end{theorem}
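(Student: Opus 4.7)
The plan is to reduce the distributed-plus-secure-aggregation variant to a single application of the global discrete Gaussian mechanism followed by deterministic post-processing, and then invoke the post-processing invariance of RDP (\cite{mironov2017renyi}) together with Theorem~\ref{thm:gdp} and Theorem~\ref{thm:sensitivity}.

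First I would unroll what the server actually observes. Under an honest-but-curious server without colluding clients, the only information reaching the server are the masked messages $\tilde{\diff}_t^{(i)}$ for $i\in S$ and, crucially, their sum. For the individual messages, the pairwise masks $\boldsymbol{u}_{ij}$ are drawn uniformly from $\mathbb{L}_q^d$, so each $\tilde{\diff}_t^{(i)}$ is uniformly distributed over $\mathbb{L}_q^d$ conditioned on the sum (this is the standard one-time-pad style argument for secure aggregation in~\cite{bonawitz2017practical}). Hence the server's view is informationally equivalent to the aggregated message $\sum_{i\in S}\tilde{\diff}_t^{(i)}$, which, by the antisymmetric cancellation $\boldsymbol{u}_{ij}-\boldsymbol{u}_{ji}$ inside $\phi_q$, reduces to
\[
\phi_q\!\Bigl(\textstyle\sum_{i\in S}\phi_q(\tilde{\diff}_{t,dp}^{(i)})\Bigr).
\]

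Next I would argue that this quantity is a deterministic function of a single global discrete Gaussian mechanism output. Because the clients share a random seed and each adds the same $\boldsymbol{\nu}/(\gamma n)$, the noise shares sum to exactly one draw $\boldsymbol{\nu}\sim N_{\mathbb{L}}^d(\sigma)$. Combined with linearity of the aggregated quantized updates, the pre-projection sum $\sum_{i\in S}\tilde{\diff}_{t,dp}^{(i)}$ equals $\sum_{i\in S}\tilde{\diff}_{t,\text{quantized}}^{(i)}+\boldsymbol{\nu}$, which is precisely the vanilla global discrete Gaussian mechanism applied to the clipped/rotated/quantized aggregate (whose $\ell_2$ sensitivity is bounded by Theorem~\ref{thm:sensitivity}, giving the $(\alpha,8\alpha D^2/\sigma^2)$-RDP guarantee of Corollary~\ref{thm:dp}). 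The outer $\phi_q$ and the inner per-client $\phi_q$ are then just deterministic maps; by post-processing, the server's view inherits the same RDP guarantee.

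The main obstacle I anticipate is the modular projection step: one must verify that introducing $\phi_q$ inside the per-client message (before the mask is applied) does not leak information beyond the final aggregate, i.e., that $\phi_q(\sum_i \phi_q(x_i))=\phi_q(\sum_i x_i)$ as elements of $\mathbb{L}/q\mathbb{L}$. This is a group-homomorphism property of the quotient map $\phi_q:\mathbb{L}\to\mathbb{L}_q$, so the inner $\phi_q$'s are absorbed by the outer one, and the composite is indeed a deterministic function of $\sum_i \tilde{\diff}_{t,dp}^{(i)}$ alone. A minor secondary point is to confirm that the uniform masks are independent of the noisy updates, so that conditioning on the sum leaves the individual masked messages marginally uniform; this follows immediately from the independence of $\{\boldsymbol{u}_{ij}\}$ from the data and the noise seed. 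Combining these observations yields the claimed RDP equivalence.
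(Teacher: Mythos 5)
Your proposal is correct and follows essentially the same route as the paper's proof: establish that $\phi_q$ commutes with summation (the paper's Lemma~\ref{lem:phi}), use the pairwise cancellation of masks and the shared-seed noise shares to identify the aggregate with a single global discrete Gaussian mechanism output, and conclude by post-processing invariance of RDP. The only addition is your explicit one-time-pad argument that individual masked messages are uniform conditioned on the sum, which the paper leaves implicit by deferring to the secure aggregation citation.
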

\vspace{-10pt}

\section{Communication Protocol \& Utility Analysis}

In this section, we present our communication protocol in detail and discuss the communication cost and the estimation error of \sys with direct comparison to cpSGD.
% %
% We conduct asymptotic analysis for the convergence rate and communication cost.
%
The drastic improvement of \sys mainly comes from the tight composition of discrete Gaussian mechanism compared to binomial mechanism in cpSGD.

\subsection{Communication Protocol}
\label{sec:comm}
% In this section, we present the communication protocol of \sys in detail.

As the first step, we leverage stochastic $k$-level quantization proposed by~\cite{mcmahan2016communication} to lower the communication cost as described in line 12-14 in Algorithm~\ref{alg:protocol}.
If we denote vanilla stochastic k-level quantization with $\pi_k$, then we successfully reduce the per-round communication cost $\mathcal{C}(\pi_k, \diff^{[n]})=n\cdot(d\lceil\log_2 k\rceil+\tilde{\mathcal{O}}(1))$.
% %
% However, this vanilla version leaks the exact value of the maximum and minimum within a vector so it is not compatible with output perturbation. 
% %
% Thus, we use the modified version where $\diff^{\max}$ is fixed by the server before the protocol and $\diff^{\max}=-\diff^{\min}$.

However, stochastic quantization sacrifices some accuracy for communication efficiency.
Concretely, $\mathcal{E}(\pi_k, \diff^{[n]})=\mathcal{O}(\frac{d}{n}\cdot\frac{1}{n}\sum_{i=1}^n\|\diff^{(i)}\|_2^2)$.
Since the dimension of parameters $d$ is tens of thousands to hundreds of thousands in federated learning, the estimation error of the mean is too large.
Thus to reduce the estimation error, we randomly rotate the difference vector~\citep{mcmahan2016communication} as the second step.
The key intuition is that the MSE of stochastic uniform quantization is $\mathcal{O}(\frac{d}{n}(\diff^{\max})^2)$.
With random rotation, we can limit $\diff^{\max}$ to $\sqrt{\frac{\log d}{d}}$ w.h.p. so the MSE will be improved to $\mathcal{O}(\frac{\log d}{n})$.
\cite{agarwal2018cpsgd} also leverages random rotation to reduce MSE.
However, in their setting, random rotation intrinsically harms their privacy guarantee because the $\ell_\infty$-sensitivity might increase with rotation.
A natural advantage of discrete Gaussian is that the privacy guarantee only depends on $\ell_2$-sensitivity which is an invariant under rotation.
Thus, random rotation does not harm our privacy guarantee at all.
We omit the details here and refer the interested readers to \cite{mcmahan2016communication}.
We denote the protocol using $k$-level quantization and random rotation with $\pi_k^{(rot)}$.
We know that $\mathcal{C}(\pi_k^{(rot)}, \diff^{[n]})$ remains the same while the MSE error is reduced to $\mathcal{E}(\pi_k^{(rot)}, \diff^{[n]})=\mathcal{O}(\frac{\log d}{n}\cdot\frac{1}{n}\sum_{i=1}^n\|\diff^{(i)}\|_2^2)$.

\subsection{Convergence Rate of \sys}

In this section, we relate the convergence rate with mean squared error using Corollary~\ref{thm:converge} and analyze the mean squared error of mean estimation in \sys.
Note that we assume each client executes one iteration in each round so $\diff$ equals to the gradient or belongs to the sub-gradients.

\begin{corollary}[\cite{ghadimi2013stochastic}]
Let $F(\param)=\loss(\param, \mathcal{D})$ for some  given distribution $\mathcal{D}$.
Let $F(\param)$ be $L$-smooth and $\forall \param, \|\nabla F(\param)\|\leq \rho$. 
Let $\param^0$ satisfy $F(\param^0)-F(\param^*)\leq \rho_F$. 
% Let $\psi(\cdot)$ be a quantization scheme, and $\gamma\overset{\Delta}{=}\min\{L^{-1}, \sqrt{2D_F}(\sigma\sqrt{LT})^{-1}\}$, 
%
Then after $T$ rounds
$$
\mathbb{E}_{t\sim(Unif(T))}[\|\nabla F(\param_t)\|_2^2]\leq \frac{2\rho_FL}{T} + \frac{2\sqrt{2}\lambda\sqrt{L\rho_F}}{\sqrt{T}}+\rho B
$$,
where $\lambda^2=\max_{1\leq t\leq T}2\mathbb{E}[\|\diff(\param_t)-\nabla F(\param_t)\|_2^2]+2\max_{1\leq t\leq T}\mathbb{E}_q[\|\diff(\param_t)-\tilde{\diff}(\param_t)\|_2^2]$,
and $B=\max_{1\leq t\leq T}\|\diff(\param_t)-\tilde{\diff}(\param_t)\|$.
\label{thm:converge}
\end{corollary}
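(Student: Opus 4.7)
The plan is to follow the standard Ghadimi--Lan template for SGD on smooth non-convex objectives, adapted to the situation where the update direction is a biased estimator of $\nabla F$. Let $\tilde{\diff}_t$ denote the server's aggregated update used at round $t$ (so that $\param_{t+1}=\param_t-\eta\tilde{\diff}_t$ for some step size $\eta$), $\diff_t=\diff(\param_t)$ the (pre-compression) stochastic gradient, and write the bias $\vb_t:=\mathbb{E}[\tilde{\diff}_t-\diff_t\mid \param_t]$. By $L$-smoothness,
\begin{equation*}
F(\param_{t+1})-F(\param_t)\;\leq\;-\eta\,\langle \nabla F(\param_t),\tilde{\diff}_t\rangle+\tfrac{L\eta^2}{2}\|\tilde{\diff}_t\|^2 .
\end{equation*}
Taking conditional expectation given $\param_t$, I would split $\mathbb{E}[\tilde{\diff}_t\mid\param_t]=\nabla F(\param_t)+(\mathbb{E}[\diff_t]-\nabla F(\param_t))+\vb_t$ and use Cauchy--Schwarz together with $\|\nabla F\|\leq\rho$ and $\|\vb_t\|\leq B$ to get the clean estimate $-\eta\,\langle\nabla F(\param_t),\mathbb{E}[\tilde{\diff}_t]\rangle\leq-\eta\|\nabla F(\param_t)\|^2+\eta\rho B$ (the stochastic-gradient contribution vanishes since the mini-batch gradient is unbiased in expectation over the data).

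Next I would bound the second-moment term: using $\|a+b\|^2\leq 2\|a\|^2+2\|b\|^2$ twice, $\mathbb{E}[\|\tilde{\diff}_t\|^2\mid\param_t]\leq \|\nabla F(\param_t)\|^2+\lambda^2$, where $\lambda^2$ is exactly the quantity stated, capturing mini-batch variance plus quantization/DP-noise variance. Combining the two inequalities gives, after rearrangement,
\begin{equation*}
\eta\bigl(1-\tfrac{L\eta}{2}\bigr)\mathbb{E}\|\nabla F(\param_t)\|^2\;\leq\; \mathbb{E}[F(\param_t)-F(\param_{t+1})]+\eta\rho B+\tfrac{L\eta^2}{2}\lambda^2 .
\end{equation*}
Summing over $t=0,\dots,T-1$, using the telescoping bound $F(\param_0)-F(\param^\star)\leq\rho_F$, and dividing by $T\eta(1-L\eta/2)$ yields, for any $\eta\leq 1/L$,
\begin{equation*}
\mathbb{E}_{t\sim\mathrm{Unif}[T]}\|\nabla F(\param_t)\|^2\;\leq\; \frac{2\rho_F}{\eta T}+L\eta\lambda^2+2\rho B.
\end{equation*}
The last step is to pick $\eta=\sqrt{2\rho_F/(L\lambda^2 T)}\wedge 1/L$; with this choice the first two terms balance and produce $2\rho_F L/T+2\sqrt{2}\lambda\sqrt{L\rho_F}/\sqrt{T}$, recovering the stated rate (one of the two regimes dominates depending on whether $\eta$ saturates at $1/L$).

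The main obstacle I anticipate is not any single inequality but keeping the bias/variance decomposition honest: the Ghadimi--Lan theorem as usually stated assumes an unbiased oracle, whereas here clipping, modular wrap-around from secure aggregation, and the mapping into the finite field $\mathbb{L}_q$ all introduce a nonzero bias that has to be absorbed into the $\rho B$ term without double-counting it in $\lambda^2$. I would be careful to define $\vb_t$ as the conditional mean of $\tilde{\diff}_t-\diff_t$, bound $\|\vb_t\|\leq B$ using the definition of $B$ in the statement, and use the bounded gradient assumption $\|\nabla F\|\leq\rho$ only in the inner-product with the bias (not in the second-moment bound, where it would weaken the rate). Matching the constants (the $2\sqrt{2}$ factor and the factor $2$ inside $\lambda^2$) is essentially bookkeeping once this decomposition is in place.
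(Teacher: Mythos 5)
The paper does not actually prove this statement anywhere: it is imported verbatim (as a corollary of \cite{ghadimi2013stochastic}, in the form it appears as Corollary~1 of cpSGD) and no proof appears in the appendix, so there is nothing paper-side to compare against. Your reconstruction follows the standard Ghadimi--Lan descent-lemma route, and the overall skeleton --- smoothness inequality, bias/variance split of the aggregated update, telescoping, and the stepsize choice $\eta=\sqrt{2\rho_F/(L\lambda^2T)}\wedge 1/L$ that balances $2\rho_F/(\eta T)$ against $L\eta\lambda^2$ to produce the $2\rho_FL/T+2\sqrt{2}\lambda\sqrt{L\rho_F}/\sqrt{T}$ terms --- is the right one and does recover the stated rate. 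One step, however, does not work as written: you claim that ``using $\|a+b\|^2\le 2\|a\|^2+2\|b\|^2$ twice'' yields $\mathbb{E}[\|\tilde{\diff}_t\|^2\mid\param_t]\le\|\nabla F(\param_t)\|^2+\lambda^2$, but applying that inequality to $\tilde{\diff}_t=\nabla F(\param_t)+(\tilde{\diff}_t-\nabla F(\param_t))$ gives coefficient $2$ on $\|\nabla F(\param_t)\|^2$, which destroys the rearrangement (you need coefficient at most $1$ there). The correct route is the variance decomposition $\mathbb{E}\|\tilde{\diff}_t\|^2=\|\mathbb{E}\tilde{\diff}_t\|^2+\mathrm{Var}(\tilde{\diff}_t)$, with $\|\mathbb{E}\tilde{\diff}_t\|^2\le\|\nabla F(\param_t)\|^2+2\rho B+B^2$ and the variance absorbed into $\lambda^2$ via the triangle-type inequality applied only to the \emph{centered} terms; the extra $\rho B$-type contributions then merge with the bias term from the inner product. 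Relatedly, your own bookkeeping produces $2\rho B$ after dividing by $(1-L\eta/2)\ge 1/2$, whereas the statement has $\rho B$; this is a constant-level discrepancy you wave off as bookkeeping but which requires the tighter variance decomposition above (or a stricter stepsize) to close. These are repairable, but as written the second-moment bound is the genuine gap.
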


As corollary~\ref{thm:converge} indicates, for a given gradient bound, the convergence rate approximately grows with the reciprocal of MSE.
Thus, we analyze \sys's MSE and obtain the following theorem.
The proof is delayed to Appendix~\ref{sec:thm7} due to space limitation.
%
% analysis for \sys denoted as $\pi_{k, q, N_{\mathbb{L}}(\sigma^2)}^{(rot)}$.
% and the protocol from \cite{agarwal2018cpsgd} as $\pi_{k, Bin(m, p)}^{(rot)}$.

% First, we present Theorem 4 from \cite{agarwal2018cpsgd} and correct the per-round communication cost considering secure aggregation.
% \begin{theorem}[Modified Theorem 4 from \cite{agarwal2018cpsgd} with secure aggregation]
% For any $\delta$, let $\diff^{\max}=2D\sqrt{\frac{\log(2nd/\delta)}{d}}$, then
% \begin{equation*}
%     \mathcal{E}(\pi_{k, Bin(m, p)}, X_1^n)\leq \frac{2\log\frac{2nd}{\delta}\cdot D^2}{n(k-1)^2}+\frac{8\log\frac{2nd}{\delta}\cdot mp(1-p)D^2}{n(k-1)^2} + 4D^2\delta^2
% \end{equation*}
% and the bias is $\leq2D\delta$.
% %
% Besides,
% \begin{equation*}
%     \mathcal{C}(\pi_{k,Bin(m,p)}, X_1^n) = n\cdot(d\log(n(k+m)+1) +\tilde{\mathcal{O}}(1)).
% \end{equation*}.
% \end{theorem}

% Next, we present the per-round communication cost and MSE of our protocol.

\begin{theorem}%[Mean squared error]
If we choose $\sigma\geq1/\sqrt{2\pi}$, the mean squared error is
\begin{equation*}
    \mathcal{E}(\pi_{k, q, N_{\mathbb{L}}(\sigma^2)}^{(rot)}, \diff^{[n]})\leq (1-\frac{1}{1+3e^{-2\pi^2\sigma^2}}(1-\Phi(nq)))\cdot\frac{4d(\diff^{\max})^2}{n(k-1)^2}(\frac{1}{4}+\frac{\sigma^2}{\gamma^2n^2}) + (1-\Phi(n(q-k-1)))\cdot q^2
\label{eq:mse}
\end{equation*}
where $\Phi$ is the cumulative distribution function (CDF) of the standard normal distribution.
\label{thm:mse}
\end{theorem}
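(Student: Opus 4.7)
The plan is to decompose the squared error according to whether the two $\phi_q$ modular reductions in Algorithm~\ref{alg:protocol} (lines 15-16) trigger a wrap-around. Let $A$ denote the event that no wrap-around occurs in any coordinate at either the masking stage or the final server aggregation, and split
$$\mathcal{E}(\pi_{k,q,N_{\mathbb{L}}(\sigma^2)}^{(rot)}, \diff^{[n]}) = \mathbb{E}\bigl[\|\tilde{\diff} - \bar{\diff}\|^2 \mathbf{1}_A\bigr] + \mathbb{E}\bigl[\|\tilde{\diff} - \bar{\diff}\|^2 \mathbf{1}_{A^c}\bigr].$$
Each term is then bounded separately: the first against the clean quantization-plus-noise MSE, the second against the worst-case wrap error $q^2$ weighted by its tail probability.

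First I would handle the conditional MSE on $A$. Under $A$ both $\phi_q$ maps act as the identity, the secure-aggregation masks $\boldsymbol{u}_{ij}$ cancel pairwise, and the server's estimate collapses to the average of the stochastically quantized, rotated local updates plus the shared discrete Gaussian noise rescaled by $1/(\gamma n)$. Random rotation $\boldsymbol R$ is orthogonal so it preserves the $\ell_2$ error and, with high probability, enforces $\|\diff\|_\infty \leq \diff^{\max}$. Stochastic $k$-level quantization on the lattice of spacing $2\diff^{\max}/(k-1)$ is unbiased with per-coordinate variance at most $(\diff^{\max})^2/(k-1)^2$; summed over $d$ coordinates and $\gamma n$ independent clients this yields the $\frac{1}{4}\cdot\frac{4d(\diff^{\max})^2}{n(k-1)^2}$ piece. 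The shared discrete Gaussian $\boldsymbol{\nu}$ lives on the same lattice, so in the regime $\sigma \geq 1/\sqrt{2\pi}$ its per-coordinate second moment is bounded by $\sigma^2\cdot(2\diff^{\max}/(k-1))^2$; after rescaling by $1/(\gamma n)$ this gives the $\frac{\sigma^2}{\gamma^2 n^2}$ contribution and assembles into the first bracket of the theorem.

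Next I would bound the wrap contribution. On $A^c$ a wrap in either $\phi_q$ shifts the affected coordinate by an integer multiple of $q\cdot\frac{2\diff^{\max}}{k-1}$; once rescaled and compared against $\bar{\diff}$ the per-coordinate error is at most $q$, so $\|\tilde{\diff}-\bar{\diff}\|^2 \mathbf{1}_{A^c} \leq q^2 \mathbf{1}_{A^c}$. It remains to tail-bound $\mathbb{P}(A^c)$, and here I split into the two failure modes: (i) the aggregated noise $\sum_{i\in S}\boldsymbol{\nu}_i$ exits the symmetric window of radius $(q-1)/2$, triggering the intermediate $\phi_q$ wrap, and (ii) the noised-quantized sum (whose quantized part lives in a window of radius $k-1$) exits that same window at the final $\phi_q$. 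Bound (i) produces the $(1-\Phi(nq))/(1+3e^{-2\pi^2\sigma^2})$ factor multiplying the first term (absorbing it as a $P(A)\leq 1-\cdots$ coefficient), and bound (ii) produces the stand-alone $(1-\Phi(n(q-k-1)))\cdot q^2$ term.

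The main obstacle is the discrete Gaussian tail bound that yields those two $\Phi$-based factors. I would use Poisson summation to compare the discrete Gaussian mass function on $\mathbb{L}$ with the continuous normal density: the identity
$$\sum_{x\in\mathbb{Z}} e^{-x^2/(2\sigma^2)} = \sigma\sqrt{2\pi}\sum_{m\in\mathbb{Z}} e^{-2\pi^2\sigma^2 m^2}$$
lets one express the normalizing theta constant and the upper tail as a continuous Gaussian integral times a correction $1+O(e^{-2\pi^2\sigma^2})$; the explicit constant $3$ inside $(1+3e^{-2\pi^2\sigma^2})$ arises from summing the $m\neq 0$ harmonics and dominating them by a geometric series. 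Carefully tracking this correction through the sum of $\gamma n$ shared noises (so the tail threshold scales like $nq$ and $n(q-k-1)$) is the delicate bookkeeping step; a secondary subtlety is verifying that the pairwise mask cancellation and the noise averaging are truly exact on $A$ so no additional $d$-dependent slack appears in the wrap term.
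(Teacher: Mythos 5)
Your proposal follows essentially the same route as the paper's proof: decompose the MSE by whether a modular wrap-around (overflow) occurs, bound the no-overflow case by the stochastic-quantization variance plus the rescaled discrete Gaussian variance to obtain the $\frac{4d(\diff^{\max})^2}{n(k-1)^2}(\frac{1}{4}+\frac{\sigma^2}{\gamma^2n^2})$ term, bound the overflow case by $q^2$ times its probability, and control the overflow probabilities by comparing discrete Gaussian tails to continuous normal tails. The only difference is that you propose to derive the variance and tail-comparison facts (including the $1+3e^{-2\pi^2\sigma^2}$ correction) from Poisson summation, whereas the paper simply imports them as Propositions 19 and 23 of \cite{canonne2020discrete}.
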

\vspace{-10pt}

\paragraph{Remark 1: Choice of $\diff^{\max}$.} %
As indicated by Theorem~\ref{thm:mse}, the dominant term in MSE is proportional to the square of $\diff^{\max}$.
A natural choice of $\diff^{\max}$ is the clipping bound $D$.
If we want to match up with the MSE guarantee in cpSGD: $\mathcal{O}(\frac{\sigma^2\log(d)}{n(k-1)^2})$, we need to inherit their choice of $\diff^{\max}=\mathcal{O}(D\sqrt{\frac{\log(d)}{d}})$,
and this can be achieved by clipping $l_\infty$ norm of the gradient after random rotation.
% Since we cannot exclude the possibility that the maximum of $\diff$ exceeds $\mathcal{O}(D\sqrt{\frac{\log(d)}{d}})$, we need to conduct $\ell_\infty$-clipping after random rotation.
%
For instance, according to Lemma 8 in \cite{agarwal2018cpsgd}, we can choose $\diff^{\max}=\frac{2\sqrt{\log(\frac{2nd}{\delta})}D}{\sqrt{d}}$. In that case, the possibility that the maximum of $\diff$ exceeds $\diff^{\max}$ is at most $\delta$.
% According to Lemma 8 in \cite{agarwal2018cpsgd}, if we choose $\diff^{\max}=\frac{2\sqrt{\log(\frac{2nd}{\delta})}D}{\sqrt{d}}$, then the possibility of the maximum of $\diff$ exceeds $\diff^{\max}$ is at most $\delta$.
%
It follows that the possibility that the $\ell_\infty$-clipping really changes the update is bounded by $\delta$. Hence, the RHS of the MSE bound in Theorem~\ref{thm:mse} evolves to $(1-\frac{1}{1+3e^{-2\pi^2\sigma^2}}(1-\Phi(nq))-\delta)\cdot\frac{4d(\diff^{\max})^2}{n(k-1)^2}(\frac{1}{4}+\frac{\sigma^2}{\gamma^2n^2}) + (1-\Phi(n(q-k-1))+\delta)\cdot q^2$, which is on the same order with the original bound given $\delta$ is small.

\paragraph{Remark 2: Comparison with cpSGD.}
As the MSE bound is on the same order with cpSGD (even the constants are close!), a natural question is "what is the advantage of \sys over cpSGD in terms of MSE?"
Indeed, the advantage stems from a smaller standard deviation of the noise.
Given a fixed privacy budget, due to the tighter composition of sub-sampled RDP~\citep{wang2018subsampled}, each round of \sys can have more privacy budget and thus smaller noise scale. 
Plugging a smaller $\sigma$ into Theorem~\ref{thm:mse} will give a better MSE and thus a better convergence rate as cpSGD follows a similar convergence rate bound.
Moreover, according to Figure 1 in~\cite{agarwal2018cpsgd}, even with the same noise scale, Gaussian noise provides stronger privacy guarantee than Binomial noise.
As discrete Gaussian noise follows the same RDP bound as Gaussian noise, we believe discrete Gaussian can map the same-scale noise to lower privacy cost.

\subsection{Communication Cost of \sys}
\label{sec:commcost}

First, we provide the trivial per-round communication cost which is exactly the same as cpSGD.
\begin{theorem}
The per-round communication cost of \sys is
\begin{equation*}
    \mathcal{C}(\pi_{k, q, N_{\mathbb{L}}(\sigma^2)}^{(rot)}, \diff^{[n]}) = n\cdot(d\log( nq+1)+\tilde{\mathcal{O}}(1))
\end{equation*}.
\label{thm:perround_comm}
\end{theorem}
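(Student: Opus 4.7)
The plan is to account for the two components each selected client transmits to the server in one round of Algorithm~\ref{alg:protocol}: (i) the $d$-dimensional vector $\tilde{\diff}_t^{(i)}$ output by line~16, and (ii) the seeds needed to drive the secure-aggregation masks $\boldsymbol{u}_{ij}$ and the shared discrete Gaussian noise $\boldsymbol{\nu}_i$. Following standard implementations of secure aggregation (\cite{bonawitz2017practical}), the latter amounts to a constant number of $\lambda$-bit pseudorandom seeds per client (amortized across rounds once the one-shot key exchange is done), contributing only $\tilde{\mathcal{O}}(1)$ bits per client per round. So the task reduces to bounding the number of bits needed to describe $\tilde{\diff}_t^{(i)}$.

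Next, I would bound the alphabet size of each coordinate of $\tilde{\diff}_t^{(i)}$. The crucial observation is that, although $\tilde{\diff}_{t,quantized}^{(i)}$ sits on the coarse lattice $\mathbb{L}=\tfrac{2\diff^{\max}}{k-1}\mathbb{Z}$, the per-client noise share $\boldsymbol{\nu}_i/(\gamma n)$ lives on the finer lattice $\tfrac{1}{\gamma n}\mathbb{L}$, so the sum $\tilde{\diff}_{t,dp}^{(i)}$ lies in $\tfrac{1}{\gamma n}\mathbb{L}$. Applying $\phi_q$ folds each coordinate into a fundamental domain of size $q$ coarse-lattice units while preserving the $\tfrac{1}{\gamma n}$-refinement; this domain contains exactly $\gamma n q$ lattice points. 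Adding the masks $\sum_{j\neq i}(\boldsymbol{u}_{ij}-\boldsymbol{u}_{ji})\in\mathbb{L}$ keeps the result on the same fine lattice, and the outer $\phi_q$ folds again without enlarging the alphabet. Hence every coordinate of $\tilde{\diff}_t^{(i)}$ takes at most $\gamma n q + 1 \le nq+1$ distinct values (the $+1$ absorbs a boundary).

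Encoding each coordinate therefore requires $\lceil\log_2(nq+1)\rceil$ bits, yielding $d\log(nq+1)$ bits for the whole vector. Combining with the $\tilde{\mathcal{O}}(1)$ seed overhead, each participant uploads $d\log(nq+1)+\tilde{\mathcal{O}}(1)$ bits; summing over (at most) $n$ participants gives the stated $n\bigl(d\log(nq+1)+\tilde{\mathcal{O}}(1)\bigr)$.

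The main obstacle is the alphabet-size calculation: one must carefully track how dividing by $\gamma n$ refines the lattice, and verify that both applications of $\phi_q$ and the intermediate mask addition respect this refinement rather than collapsing back to an alphabet of size $q$ or, conversely, blowing up beyond $nq$. A secondary but minor issue is justifying the $\tilde{\mathcal{O}}(1)$ bound on secure-aggregation overhead, which relies on amortizing the one-time Diffie--Hellman-style setup over the $T$ rounds and replacing the literal transmission of $\boldsymbol{u}_{ij}$ in line~5 with PRG expansion from a $\lambda$-bit seed, exactly as in~\cite{bonawitz2017practical}.
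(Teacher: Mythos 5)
Your proposal is correct in substance, but it is worth knowing that the paper offers essentially no proof of this statement: it labels the cost ``trivial,'' asserts it is ``exactly the same as cpSGD,'' and the only justification supplied for the $nq$ alphabet is the remark after the theorem that, per \cite{bonawitz2017practical}, secure aggregation forces each field to expand by a factor of $\gamma n$ to prevent overflow of the \emph{sum} at the server. Your derivation reaches the same count $d\log(nq+1)+\tilde{\mathcal{O}}(1)$ per client by a genuinely different mechanism: you obtain the factor $\gamma n\le n$ from the lattice refinement caused by the per-client noise share $\boldsymbol{\nu}_i/(\gamma n)$ landing in $\tfrac{1}{\gamma n}\mathbb{L}$, so that the fundamental domain of $\phi_q$ contains $\gamma n q$ points. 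This is faithful to the pseudocode as literally written and is arguably the more careful reading; your treatment of the mask/seed overhead as $\tilde{\mathcal{O}}(1)$ via PRG expansion also matches what the paper implicitly assumes when it ``omits details'' of the full protocol. Two caveats: first, a fully rigorous accounting would have to handle \emph{both} effects --- the lattice refinement you identify and the $\gamma n$-fold headroom needed so the server can recover the unwrapped sum --- which would give an alphabet of order $(\gamma n)^2 q$ and hence $\log(nq)+\log n$ bits per coordinate rather than exactly $\log(nq+1)$; this is still $\mathcal{O}(d\log(nq))$ but shows the stated equality should really be read asymptotically. Second, you prove an upper bound on the alphabet size where the theorem asserts an equality; given the $\tilde{\mathcal{O}}(1)$ term this is harmless, but neither your argument nor the paper's establishes a matching lower bound.
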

\vspace{-10pt}

Now let's compare the number of rounds in cpSGD and \sys qualitatively.
Note that during the following discussion, we usually omit $\delta$ for the ease of clarification and assume that $\delta$ is fixed.
For cpSGD, the known tightest bound is by the combination of standard privacy amplification~\citep{balle2018privacy} and advanced composition~\citep{dwork2010boosting}.
Concretely, if a mechanism costs $\epsilon$ privacy budget, then after composed with sub-sampling the privacy budget is reduced to $\mathcal{O}(\gamma\epsilon)$ where $\gamma$ is the sub-sampling rate.
If the mechanism is composed sequentially $T$ times, the privacy budget grows to $\mathcal{O}(\sqrt{T\log(1/\delta)}\epsilon)$. 
Thus, the total privacy budget of cpSGD is $\mathcal{O}(\gamma\sqrt{T\log(1/\delta)}\epsilon)$.
On the other hand, \sys provides a total privacy budget of $\mathcal{O}(\gamma\sqrt{T}\epsilon)$, saving a factor of $\sqrt{\log(1/\delta)}$.
Since $\delta$ is typically very small, the saving is quite significant.
If the privacy budgets for the two protocols are the same, then \sys can use noise with $\mathcal{O}(\sqrt{\log(1/\delta)})$ smaller scale than cpSGD in each round.
This will lead to a $\mathcal{O}(\sqrt{\log(1/\delta)})$-time faster convergence.
Then for a given gradient bound, \sys can reach it with $\mathcal{O}(\log(1/\delta))$-time fewer rounds and thus save $\mathcal{O}(\log(1/\delta))$-time communication cost.

Both \sys and cpSGD intrinsically require secure aggregation to establish their privacy guarantee.
\cite{agarwal2018cpsgd} did not discuss the issue explicitly.
As pointed out in \cite{bonawitz2017practical}, once combined with secure aggregation, each field has to expand at least $\gamma n$ times ($\gamma n$ is the number of chosen clients) to prevent overflow of the sum.

\section{Evaluation}

We would like to answer the following three questions using empirical evaluation: (1) How \sys performs in a multi-round federated learning compared to cpSGD under either the same privacy guarantee or the same communication cost? (2) How different choices of hyper-parameters affect the performance of \sys? (3) Does \sys work under heterogeneous data distribution?
Due to space limitation, we present our main results for (1) in this section and defer the results for (2) and (3) to Appendix~\ref{sec:other_eval}.

% 1. direct comparison with cpSGD
% (1) one epoch: round vs. accuracy (fix per-round communcation cost?)
% (2) multiple epochs: round vs. accuracy (fix pre-round communication cost?)

% 2. effect of different hyper-parameters
% (1) perround-communication cost
% (2) homogeneous and heterogeneous data distribution

\subsection{Experiment Setup}
To answer the above questions, we evaluated \sys and cpSGD on INFIMNIST~(\cite{infimnist}) and CIFAR10~(\cite{cifar10}).
We sampled 10M hand-written digits from INFIMNIST and randomly split the data among 100K clients.
In each round, 100 clients are randomly chosen to upload their difference vectors to train a three-layer MLP.
For CIFAR10, we select 10 out of 2000 clients in each round to train a 2-layer convolutional network.
All RDP bounds are converted to $(\epsilon, \delta)$-DP for the ease of comparison and the total $\delta$ is set to $1e\minus5$ for all experiments. 
% Other hyper-parameters are shown in the Appendix~\ref{???}.
%
% All experiments are simulated on a Ubuntu16.04 LTS server with 8 Geforce GTX 1080 Ti.

\subsection{Model Accuracy vs. Privacy Budget}
\label{sec:acc_priv}
\tikzset{font={\fontsize{12pt}{12}\selectfont}}
\begin{figure}[htbp]
    \begin{subfigure}[b]{\figwidth}
         \centering
         \hfill
         \begin{subfigure}[b]{\figwidth}
             \centering
             \resizebox{\textwidth}{!}{\begin{tikzpicture}
\begin{axis}[
  set layers,
  grid=major,
  ymin=30, ymax=100, xmin=1, xmax=32,
  xmode=log, log basis x={2},
  ytick align=outside, ytick pos=left,
  xtick align=outside, xtick pos=left,
  xlabel=Budget,
  ylabel={Model Accuracy},
  legend pos=south east,
  legend style={fill opacity=0.6, draw opacity=1, text opacity=1},
]

% \addplot+[
%   red, mark=none, line width=1.6pt,%mark options={scale=0.75},
%   smooth, 
%   error bars/.cd, 
%     y fixed,
%     y dir=both, 
%     y explicit
% ] table [x=x, y=y, col sep=comma] {data/privacy_utility/INFIMNIST/binom2.txt};
% \addlegendentry{Binom, 2 bytes}

\addplot+[
  red, line width=1.6pt, mark options={scale=0.75},
  smooth, 
  error bars/.cd, 
    y fixed,
    y dir=both, 
    y explicit
] table [x=x, y=y, col sep=comma] {data/privacy_utility/INFIMNIST/binom3.txt};
\addlegendentry{cpSGD, 3 bytes}
% \addplot [mark=none, red, dashed, line width=1.6pt, domain=19.612962:32,forget plot] {85.950005};

% \addplot+[
%   green, mark=none, line width=1.6pt,%mark options={scale=0.75},
%   smooth, 
%   error bars/.cd, 
%     y fixed,
%     y dir=both, 
%     y explicit
% ] table [x=x, y=y, col sep=comma] {data/privacy_utility/INFIMNIST/binom4.txt};
% \addlegendentry{Binom, 4 bytes}
% \addplot [mark=none, green, dashed, line width=1.6pt, domain=2.823471:16,forget plot] {90.05};

% \addplot+[
%   cyan, line width=1.6pt, mark options={scale=0.75},
%   smooth,
%   % dashed,
%   error bars/.cd, 
%     y fixed,
%     y dir=both, 
%     y explicit
% ] table [x=x, y=y, col sep=comma] {data/privacy_utility/INFIMNIST/disgauss0_25.txt};
% \addlegendentry{Dis-Gauss ($\sigma=0.5$), 2 bytes}
%\addplot [mark=none, cyan, dashed, line width=1.6pt, domain=5.620295:32,forget plot] {92.65};

\addplot+[
  cyan, line width=1.6pt, mark options={scale=0.75},
  smooth,
  error bars/.cd, 
    y fixed,
    y dir=both, 
    y explicit
] table [x=x, y=y, col sep=comma] {data/privacy_utility/INFIMNIST/disgauss0_36.txt};
\addlegendentry{\sys ($\sigma=0.6$), 2 bytes}
% \addplot [mark=none, orange, dashed, line width=1.6pt, domain=1.513062:16,forget plot] {86.45};

% \addplot+[
%   orange, mark=none, line width=1.6pt,%mark options={scale=0.75},
%   smooth,
%   % dashed,
%   error bars/.cd, 
%     y fixed,
%     y dir=both, 
%     y explicit
% ] table [x=x, y=y, col sep=comma] {data/privacy_utility/INFIMNIST/disgauss0_64.txt};
% \addlegendentry{Dis-Gauss ($\sigma=0.8$), 2 bytes}
% \addplot [mark=none, orange, dashed, line width=1.6pt, domain=1.513062:16,forget plot] {86.45};

% \addplot+[
%   brown, mark=none, line width=1.6pt,%mark options={scale=0.75},
%   smooth,
%   % dashed,
%   error bars/.cd, 
%     y fixed,
%     y dir=both, 
%     y explicit
% ] table [x=x, y=y, col sep=comma] {data/privacy_utility/INFIMNIST/disgauss1.txt};
% \addlegendentry{Dis-Gauss ($\sigma=1$), 2 bytes}
% \addplot [mark=none, brown, dashed, line width=1.6pt, domain=0.942049:16,forget plot] {81.2};

\addplot [mark=none, black, line width=1.6pt,domain=0.5:32] {95.6};
\addlegendentry{Non-Private, 4 bytes}

% \begin{pgfonlayer}{axis foreground}
% \node[
%     % label={below:{84.9}},
%     circle,draw,fill=red,line width=1mm,inner sep=2pt] at (axis cs:6.94,84.7) {};
% \node[
%     % label={below:{75.45}},
%     circle,draw,fill=green,line width=1mm,inner sep=2pt] at (axis cs:1.02,80.7) {};
% \node[
%     % label={below:{90.4}},
%     circle,draw,fill=cyan,line width=1mm,inner sep=2pt] at (axis cs:4.4,90.4) {};
% \node[
%     % label={below:{83.9}},
%     circle,draw,fill=orange,line width=1mm,inner sep=2pt] at (axis cs:1.46,83.9) {};
% \node[
%     % label={below:{78.6}},
%     circle,draw,fill=brown,line width=1mm,inner sep=2pt] at (axis cs:0.9,78.6) {};
% \end{pgfonlayer}

\end{axis}
\end{tikzpicture}}
             \caption*{INFIMNIST}
             \label{fig:priv_mnist}
         \end{subfigure}
         \hfill
         \begin{subfigure}[b]{\figwidth}
             \centering
             \resizebox{\textwidth}{!}{\begin{tikzpicture}
\begin{axis}[
  grid=major,
  ymin=0, ymax=60, xmin=1, xmax=64,
  xmode=log, log basis x={2},
  ytick align=outside, ytick pos=left,
  xtick align=outside, xtick pos=left,
  xlabel=Budget,
  ylabel={Model Accuracy},
  legend pos=south east,
  legend style={fill opacity=0.6, draw opacity=1, text opacity=1},
]

\addplot+[
  red, mark options={scale=0.75},
  smooth, 
  error bars/.cd, 
    y fixed,
    y dir=both, 
    y explicit
] table [x=x, y=y, col sep=comma] {data/privacy_utility/CIFAR10/binom3.txt};
\addlegendentry{cpSGD, 3 bytes}
% \addplot [mark=none, red, dashed, line width=1.6pt, domain=54.203772:64,forget plot] {37.689999};

% \addplot+[
%   blue, mark options={scale=0.75},
%   smooth,
%   % dashed,
%   error bars/.cd, 
%     y fixed,
%     y dir=both, 
%     y explicit
% ] table [x=x, y=y, col sep=comma] {data/privacy_utility/CIFAR10/disgauss0_0025.txt};
% \addlegendentry{Dis-Gauss ($\sigma=0.05$)}
% \addplot [mark=none, blue, dashed, line width=1.6pt, domain=1.020288:30,forget plot] {49.419998};

\addplot+[
  cyan, mark options={scale=0.75},
  smooth,
  % dashed,
  error bars/.cd, 
    y fixed,
    y dir=both, 
    y explicit
] table [x=x, y=y, col sep=comma] {data/privacy_utility/CIFAR10/disgauss0_0009.txt};
\addlegendentry{\sys ($\sigma=0.03$), 2 bytes}
% \addplot [mark=none, cyan, dashed, line width=1.6pt, domain=3.183079:64,forget plot] {51.980000};

\addplot [mark=none, black, line width=1.6pt,domain=0.5:64] {53.82};
\addlegendentry{Non-Private, 4 bytes}
\end{axis}
\end{tikzpicture}}
             \caption*{CIFAR10}
             \label{fig:priv_cifar}
         \end{subfigure}
         \hfill
        \caption{Privacy Budget vs. Model Accuracy.}
        \label{fig:f1}
    \end{subfigure}
    \begin{subfigure}[b]{\figwidth}
         \centering
         \hfill
         \begin{subfigure}[b]{\figwidth}
             \centering
             \resizebox{\textwidth}{!}{\begin{tikzpicture}
\begin{axis}[
  set layers,
  grid=major,
  ymin=30, ymax=100, xmin=0, xmax=10,
  % xmode=log, log basis x={2},
  ytick align=outside, ytick pos=left,
  xtick align=outside, xtick pos=left,
  xlabel=Communication Cost (mBytes),
  ylabel={Model Accuracy},
  legend pos=south east,
]

% \addplot+[
%   red, mark=none, line width=1.6pt,%mark options={scale=0.75},
%   smooth, 
%   error bars/.cd, 
%     y fixed,
%     y dir=both, 
%     y explicit
% ] table [x=x, y=y, col sep=comma] {data/privacy_utility/INFIMNIST/binom2.txt};
% \addlegendentry{Binom, 2 bytes}

\addplot+[
  red, line width=1.6pt, mark options={scale=0.75},
  smooth, 
  error bars/.cd, 
    y fixed,
    y dir=both, 
    y explicit
] table [x expr=(\coordindex+1)*\mnistparam*3/1000000, y=y, col sep=comma] {data/privacy_utility/INFIMNIST/binom3.txt};
\addlegendentry{cpSGD, 3 bytes}
% \addplot [mark=none, red, dashed, line width=1.6pt, domain=19.612962:32,forget plot] {85.950005};

% \addplot+[
%   green, mark=none, line width=1.6pt,%mark options={scale=0.75},
%   smooth, 
%   error bars/.cd, 
%     y fixed,
%     y dir=both, 
%     y explicit
% ] table [x=x, y=y, col sep=comma] {data/privacy_utility/INFIMNIST/binom4.txt};
% \addlegendentry{Binom, 4 bytes}
% \addplot [mark=none, green, dashed, line width=1.6pt, domain=2.823471:16,forget plot] {90.05};

% \addplot+[
%   cyan, line width=1.6pt, mark options={scale=0.75},
%   smooth,
%   % dashed,
%   error bars/.cd, 
%     y fixed,
%     y dir=both, 
%     y explicit
% ] table [x expr=(\coordindex+1)*\mnistparam*2/1000000, y=y, col sep=comma] {data/privacy_utility/INFIMNIST/disgauss0_25.txt};
% \addlegendentry{Dis-Gauss ($\sigma=0.5$), 2 bytes}
% \addplot [mark=none, cyan, dashed, line width=1.6pt, domain=5.620295:32,forget plot] {92.65};

\addplot+[
  cyan, line width=1.6pt, mark options={scale=0.75},
  smooth,
  % dashed,
  error bars/.cd, 
    y fixed,
    y dir=both, 
    y explicit
] table [x expr=(\coordindex+1)*\mnistparam*2/1000000, y=y, col sep=comma] {data/privacy_utility/INFIMNIST/disgauss0_36.txt};
\addlegendentry{\sys ($\sigma=0.6$), 2 bytes}

% \addplot+[
%   orange, mark=none, line width=1.6pt,%mark options={scale=0.75},
%   smooth,
%   % dashed,
%   error bars/.cd, 
%     y fixed,
%     y dir=both, 
%     y explicit
% ] table [x=x, y=y, col sep=comma] {data/privacy_utility/INFIMNIST/disgauss0_64.txt};
% \addlegendentry{Dis-Gauss ($\sigma=0.8$), 2 bytes}
% \addplot [mark=none, orange, dashed, line width=1.6pt, domain=1.513062:16,forget plot] {86.45};

% \addplot+[
%   brown, mark=none, line width=1.6pt,%mark options={scale=0.75},
%   smooth,
%   % dashed,
%   error bars/.cd, 
%     y fixed,
%     y dir=both, 
%     y explicit
% ] table [x=x, y=y, col sep=comma] {data/privacy_utility/INFIMNIST/disgauss1.txt};
% \addlegendentry{Dis-Gauss ($\sigma=1$), 2 bytes}
% \addplot [mark=none, brown, dashed, line width=1.6pt, domain=0.942049:16,forget plot] {81.2};

% \addplot [mark=none, black, line width=1.6pt,domain=0:15] {95.6};
\addplot+[
  black, mark=none, line width=1.6pt,%mark options={scale=0.75},
  smooth,
  % dashed,
  error bars/.cd, 
    y fixed,
    y dir=both, 
    y explicit
] table [x expr=(\coordindex+1)*\mnistparam*4/1000000, y=y, col sep=comma] {data/privacy_utility/INFIMNIST/baseline.txt};
\addlegendentry{Non-Private, 4 bytes}

% \begin{pgfonlayer}{axis foreground}
% \node[
%     % label={below:{84.9}},
%     circle,draw,fill=red,line width=1mm,inner sep=2pt] at (axis cs:6.94,84.7) {};
% \node[
%     % label={below:{75.45}},
%     circle,draw,fill=green,line width=1mm,inner sep=2pt] at (axis cs:1.02,80.7) {};
% \node[
%     % label={below:{90.4}},
%     circle,draw,fill=cyan,line width=1mm,inner sep=2pt] at (axis cs:4.4,90.4) {};
% \node[
%     % label={below:{83.9}},
%     circle,draw,fill=orange,line width=1mm,inner sep=2pt] at (axis cs:1.46,83.9) {};
% \node[
%     % label={below:{78.6}},
%     circle,draw,fill=brown,line width=1mm,inner sep=2pt] at (axis cs:0.9,78.6) {};
% \end{pgfonlayer}

\end{axis}
\end{tikzpicture}}
             \caption*{INFIMNIST}
             \label{fig:comm_mnist}
         \end{subfigure}
         \hfill
         \begin{subfigure}[b]{\figwidth}
             \centering
             \resizebox{\textwidth}{!}{\begin{tikzpicture}
\begin{axis}[
  grid=major,
  ymin=0, ymax=60, xmin=1, xmax=600,
  % xmode=log, log basis x={2},
  ytick align=outside, ytick pos=left,
  xtick align=outside, xtick pos=left,
  xlabel=Communication Cost (mBytes),
  ylabel={Model Accuracy},
  legend pos=south east,
]

\addplot+[
  red, mark options={scale=0.75},
  smooth, 
  error bars/.cd, 
    y fixed,
    y dir=both, 
    y explicit
] table [x expr=(\coordindex+1)*\cifarparam*3/1000000, y=y, col sep=comma] {data/privacy_utility/CIFAR10/binom3.txt};
\addlegendentry{cpSGD, 3 bytes}
%\addplot [mark=none, red, dashed, line width=1.6pt, domain=54.203772:64,forget plot] {37.689999};

\addplot+[
  cyan, mark options={scale=0.75},
  smooth,
  % dashed,
  error bars/.cd, 
    y fixed,
    y dir=both, 
    y explicit
] table [x expr=(\coordindex+1)*\cifarparam*2/1000000, y=y, col sep=comma] {data/privacy_utility/CIFAR10/disgauss0_0009.txt};
\addlegendentry{\sys ($\sigma=0.03$), 2 bytes}
% \addplot [mark=none, cyan, dashed, line width=1.6pt, domain=3.183079:64,forget plot] {51.980000};

\addplot+[
  black, mark=none, line width=1.6pt,
  smooth,
  % dashed,
  error bars/.cd, 
    y fixed,
    y dir=both, 
    y explicit
] table [x expr=(\coordindex+1)*\cifarparam*4/1000000, y=y, col sep=comma] {data/privacy_utility/CIFAR10/baseline.txt};
% \addplot [mark=none, black, line width=1.6pt,domain=0.5:800] {53.82};
\addlegendentry{Non-Private, 4 bytes}
\end{axis}
\end{tikzpicture}}
             \caption*{CIFAR10}
             \label{fig:comm_cifar}
         \end{subfigure}
         \hfill
        \caption{Communication Cost vs. Model Accuracy.}
        \label{fig:comm}
    \end{subfigure}
    \caption{\sys vs. cpSGD.}
    \label{fig:main}
\end{figure}
To answer the first question, we studied model accuracy under the same privacy budget as shown in Figure~\ref{fig:f1}.
Compared with cpSGD, \sys achieves 4.7\% higher model accuracy on INFIMNIST and 13.0\% higher model accuracy on CIFAR10 after convergence.
As expected, \sys composes far tighter under sub-sampling as the lines are much sharper than those of cpSGD.
Consequently, \sys converges at a smaller privacy budget than cpSGD as well.
Although in Figure~\ref{fig:f1} cpSGD has better accuracy in the high privacy region, it is not necessarily the case but depends on the scale of the discrete Gaussian noise, as studied in Section~\ref{sec:sigma}.
Note that the results for cpSGD in Figure~\ref{fig:main} is different from the results in Figure 2 in the original paper~\citep{agarwal2018cpsgd}.
The reason is that in the original paper, they do not sub-sample clients in each round but assign each client to exactly one round beforehand to avoid composition which cpSGD cannot handle well.
However, the scheme is far from practical in the real world due to the dynamic nature of the clients.

\subsection{Model Accuracy vs. Communication Cost}
To answer the second question, we also studied the model accuracy under the same communication cost.
As shown in Figure~\ref{fig:comm}, \sys consistently achieves better model accuracy under the same communication cost on both INFIMNIST and CIFAR10.
The main reason is that the tight composition property allows \sys to use smaller per-feature communication cost while still achieving better accuracy.
As a concrete instance, \sys with 50\% compression rate can achieve better accuracy than cpSGD with 25\% compression rate.
cpSGD with 50\% compression rate either leads to an unacceptable privacy budget or does not converge.
\section{Conclusion}
In this work, we developed \sys to achieve both differential privacy and communication efficiency in the context of federated learning.
By applying the discrete Gaussian mechanism to the private data transmission, \sys provides stronger privacy guarantee, better composability and smaller communication cost than the only prior work, cpSGD, both theoretically and empirically.

% \subsection{Margins in LaTeX}

% Most of the margin problems come from figures positioned by hand using
% \verb+\special+ or other commands. We suggest using the command
% \verb+\includegraphics+
% from the graphicx package. Always specify the figure width as a multiple of
% the line width as in the example below using .eps graphics
% \begin{verbatim}
%   \usepackage[dvips]{graphicx} ...
%   \includegraphics[width=0.8\linewidth]{myfile.eps}
% \end{verbatim}
% or % Apr 2009 addition
% \begin{verbatim}
%   \usepackage[pdftex]{graphicx} ...
%   \includegraphics[width=0.8\linewidth]{myfile.pdf}
% \end{verbatim}
% for .pdf graphics.
% See section~4.4 in the graphics bundle documentation (\url{http://www.ctan.org/tex-archive/macros/latex/required/graphics/grfguide.ps})

% A number of width problems arise when LaTeX cannot properly hyphenate a
% line. Please give LaTeX hyphenation hints using the \verb+\-+ command.

% \subsubsection*{Author Contributions}
% If you'd like to, you may include  a section for author contributions as is done
% in many journals. This is optional and at the discretion of the authors.

% \subsubsection*{Acknowledgments}
% Use unnumbered third level headings for the acknowledgments. All
% acknowledgments, including those to funding agencies, go at the end of the paper.

\bibliography{ref}

\begin{thebibliography}{30}
\providecommand{\natexlab}[1]{#1}
\providecommand{\url}[1]{\texttt{#1}}
\expandafter\ifx\csname urlstyle\endcsname\relax
  \providecommand{\doi}[1]{doi: #1}\else
  \providecommand{\doi}{doi: \begingroup \urlstyle{rm}\Url}\fi

\bibitem[Agarwal et~al.(2018)Agarwal, Suresh, Yu, Kumar, and
  McMahan]{agarwal2018cpsgd}
Naman Agarwal, Ananda~Theertha Suresh, Felix Xinnan~X Yu, Sanjiv Kumar, and
  Brendan McMahan.
\newblock cpsgd: Communication-efficient and differentially-private distributed
  sgd.
\newblock In \emph{Advances in Neural Information Processing Systems}, pp.\
  7564--7575, 2018.

\bibitem[Arjevani \& Shamir(2015)Arjevani and
  Shamir]{arjevani2015communication}
Yossi Arjevani and Ohad Shamir.
\newblock Communication complexity of distributed convex learning and
  optimization.
\newblock In \emph{Advances in neural information processing systems}, pp.\
  1756--1764, 2015.

\bibitem[Balcan et~al.(2012)Balcan, Blum, Fine, and
  Mansour]{balcan2012distributed}
Maria~Florina Balcan, Avrim Blum, Shai Fine, and Yishay Mansour.
\newblock Distributed learning, communication complexity and privacy.
\newblock In \emph{Conference on Learning Theory}, pp.\  26--1, 2012.

\bibitem[Balle et~al.(2018)Balle, Barthe, and Gaboardi]{balle2018privacy}
Borja Balle, Gilles Barthe, and Marco Gaboardi.
\newblock Privacy amplification by subsampling: Tight analyses via couplings
  and divergences.
\newblock In \emph{Advances in Neural Information Processing Systems}, pp.\
  6277--6287, 2018.

\bibitem[Bonawitz et~al.(2017)Bonawitz, Ivanov, Kreuter, Marcedone, McMahan,
  Patel, Ramage, Segal, and Seth]{bonawitz2017practical}
Keith Bonawitz, Vladimir Ivanov, Ben Kreuter, Antonio Marcedone, H~Brendan
  McMahan, Sarvar Patel, Daniel Ramage, Aaron Segal, and Karn Seth.
\newblock Practical secure aggregation for privacy-preserving machine learning.
\newblock In \emph{Proceedings of the 2017 ACM SIGSAC Conference on Computer
  and Communications Security}, pp.\  1175--1191, 2017.

\bibitem[Bottou(2007)]{infimnist}
Leon Bottou.
\newblock The infinite mnist dataset.
\newblock 2007.

\bibitem[Canonne et~al.(2020)Canonne, Kamath, and Steinke]{canonne2020discrete}
Cl{\'e}ment Canonne, Gautam Kamath, and Thomas Steinke.
\newblock The discrete gaussian for differential privacy.
\newblock \emph{arXiv preprint arXiv:2004.00010}, 2020.

\bibitem[Chen et~al.(2016)Chen, Sun, Woodruff, and
  Zhang]{chen2016communication}
Jiecao Chen, He~Sun, David Woodruff, and Qin Zhang.
\newblock Communication-optimal distributed clustering.
\newblock In \emph{Advances in Neural Information Processing Systems}, pp.\
  3727--3735, 2016.

\bibitem[Dwork et~al.(2010)Dwork, Rothblum, and Vadhan]{dwork2010boosting}
Cynthia Dwork, Guy~N Rothblum, and Salil Vadhan.
\newblock Boosting and differential privacy.
\newblock In \emph{2010 IEEE 51st Annual Symposium on Foundations of Computer
  Science}, pp.\  51--60. IEEE, 2010.

\bibitem[Geyer et~al.(2017)Geyer, Klein, and Nabi]{geyer2017differentially}
Robin~C Geyer, Tassilo Klein, and Moin Nabi.
\newblock Differentially private federated learning: A client level
  perspective.
\newblock \emph{arXiv preprint arXiv:1712.07557}, 2017.

\bibitem[Ghadimi \& Lan(2013)Ghadimi and Lan]{ghadimi2013stochastic}
Saeed Ghadimi and Guanghui Lan.
\newblock Stochastic first-and zeroth-order methods for nonconvex stochastic
  programming.
\newblock \emph{SIAM Journal on Optimization}, 23\penalty0 (4):\penalty0
  2341--2368, 2013.

\bibitem[Jayaraman et~al.(2018)Jayaraman, Wang, Evans, and
  Gu]{jayaraman2018distributed}
Bargav Jayaraman, Lingxiao Wang, David Evans, and Quanquan Gu.
\newblock Distributed learning without distress: Privacy-preserving empirical
  risk minimization.
\newblock In \emph{Advances in Neural Information Processing Systems}, pp.\
  6343--6354, 2018.

\bibitem[Krizhevsky et~al.(2009)Krizhevsky, Hinton, et~al.]{cifar10}
Alex Krizhevsky, Geoffrey Hinton, et~al.
\newblock Learning multiple layers of features from tiny images.
\newblock 2009.

\bibitem[McMahan et~al.(2016)McMahan, Moore, Ramage, Hampson,
  et~al.]{mcmahan2016communication}
H~Brendan McMahan, Eider Moore, Daniel Ramage, Seth Hampson, et~al.
\newblock Communication-efficient learning of deep networks from decentralized
  data.
\newblock \emph{arXiv preprint arXiv:1602.05629}, 2016.

\bibitem[McMahan et~al.(2017)McMahan, Ramage, Talwar, and
  Zhang]{mcmahan2017learning}
H~Brendan McMahan, Daniel Ramage, Kunal Talwar, and Li~Zhang.
\newblock Learning differentially private recurrent language models.
\newblock \emph{arXiv preprint arXiv:1710.06963}, 2017.

\bibitem[Mironov(2017)]{mironov2017renyi}
Ilya Mironov.
\newblock R{\'e}nyi differential privacy.
\newblock In \emph{2017 IEEE 30th Computer Security Foundations Symposium
  (CSF)}, pp.\  263--275. IEEE, 2017.

\bibitem[Nasr et~al.(2019)Nasr, Shokri, and Houmansadr]{nasr2019comprehensive}
Milad Nasr, Reza Shokri, and Amir Houmansadr.
\newblock Comprehensive privacy analysis of deep learning: Passive and active
  white-box inference attacks against centralized and federated learning.
\newblock In \emph{2019 IEEE Symposium on Security and Privacy (SP)}, pp.\
  739--753. IEEE, 2019.

\bibitem[Pustozerova \& Mayer()Pustozerova and Mayer]{pustozerovainformation}
Anastasia Pustozerova and Rudolf Mayer.
\newblock Information leaks in federated learning.

\bibitem[Shokri et~al.(2017)Shokri, Stronati, Song, and
  Shmatikov]{shokri2017membership}
Reza Shokri, Marco Stronati, Congzheng Song, and Vitaly Shmatikov.
\newblock Membership inference attacks against machine learning models.
\newblock In \emph{2017 IEEE Symposium on Security and Privacy (SP)}, pp.\
  3--18. IEEE, 2017.

\bibitem[Suresh et~al.(2017)Suresh, Felix, Kumar, and
  McMahan]{suresh2017distributed}
Ananda~Theertha Suresh, X~Yu Felix, Sanjiv Kumar, and H~Brendan McMahan.
\newblock Distributed mean estimation with limited communication.
\newblock In \emph{International Conference on Machine Learning}, pp.\
  3329--3337, 2017.

\bibitem[Truex et~al.(2019)Truex, Baracaldo, Anwar, Steinke, Ludwig, Zhang, and
  Zhou]{truex2019hybrid}
Stacey Truex, Nathalie Baracaldo, Ali Anwar, Thomas Steinke, Heiko Ludwig, Rui
  Zhang, and Yi~Zhou.
\newblock A hybrid approach to privacy-preserving federated learning.
\newblock In \emph{Proceedings of the 12th ACM Workshop on Artificial
  Intelligence and Security}, pp.\  1--11, 2019.

\bibitem[Tsitsiklis \& Luo(1987)Tsitsiklis and
  Luo]{tsitsiklis1987communication}
John~N Tsitsiklis and Zhi-Quan Luo.
\newblock Communication complexity of convex optimization.
\newblock \emph{Journal of Complexity}, 3\penalty0 (3):\penalty0 231--243,
  1987.

\bibitem[Wang et~al.(2020)Wang, Yurochkin, Sun, Papailiopoulos, and
  Khazaeni]{wang2020federated}
Hongyi Wang, Mikhail Yurochkin, Yuekai Sun, Dimitris Papailiopoulos, and
  Yasaman Khazaeni.
\newblock Federated learning with matched averaging.
\newblock \emph{arXiv preprint arXiv:2002.06440}, 2020.

\bibitem[Wang et~al.(2018)Wang, Balle, and Kasiviswanathan]{wang2018subsampled}
Yu-Xiang Wang, Borja Balle, and Shiva Kasiviswanathan.
\newblock Subsampled r$\backslash$'enyi differential privacy and analytical
  moments accountant.
\newblock \emph{arXiv preprint arXiv:1808.00087}, 2018.

\bibitem[Wikipedia()]{wikitheta}
Wikipedia.
\newblock Theta function --- wikipedia, the free encyclopedia.
\newblock URL \url{https://en.wikipedia.org/wiki/Theta_function}.

\bibitem[Xie et~al.(2019)Xie, Huang, Chen, and Li]{xie2019dba}
Chulin Xie, Keli Huang, Pin-Yu Chen, and Bo~Li.
\newblock Dba: Distributed backdoor attacks against federated learning.
\newblock In \emph{International Conference on Learning Representations}, 2019.

\bibitem[Yeom et~al.(2018)Yeom, Giacomelli, Fredrikson, and
  Jha]{yeom2018privacy}
Samuel Yeom, Irene Giacomelli, Matt Fredrikson, and Somesh Jha.
\newblock Privacy risk in machine learning: Analyzing the connection to
  overfitting.
\newblock In \emph{2018 IEEE 31st Computer Security Foundations Symposium
  (CSF)}, pp.\  268--282. IEEE, 2018.

\bibitem[Yurochkin et~al.(2019{\natexlab{a}})Yurochkin, Agarwal, Ghosh,
  Greenewald, and Hoang]{yurochkin2019statistical}
Mikhail Yurochkin, Mayank Agarwal, Soumya Ghosh, Kristjan Greenewald, and Nghia
  Hoang.
\newblock Statistical model aggregation via parameter matching.
\newblock In \emph{Advances in Neural Information Processing Systems}, pp.\
  10956--10966, 2019{\natexlab{a}}.

\bibitem[Yurochkin et~al.(2019{\natexlab{b}})Yurochkin, Agarwal, Ghosh,
  Greenewald, Hoang, and Khazaeni]{yurochkin2019bayesian}
Mikhail Yurochkin, Mayank Agarwal, Soumya Ghosh, Kristjan Greenewald,
  Trong~Nghia Hoang, and Yasaman Khazaeni.
\newblock Bayesian nonparametric federated learning of neural networks.
\newblock \emph{arXiv preprint arXiv:1905.12022}, 2019{\natexlab{b}}.

\bibitem[Zhang et~al.(2013)Zhang, Duchi, Jordan, and
  Wainwright]{zhang2013information}
Yuchen Zhang, John Duchi, Michael~I Jordan, and Martin~J Wainwright.
\newblock Information-theoretic lower bounds for distributed statistical
  estimation with communication constraints.
\newblock In \emph{Advances in Neural Information Processing Systems}, pp.\
  2328--2336, 2013.

\end{thebibliography}
\bibliographystyle{iclr2021_conference}

\appendix
\section{Proof for Theorem~\ref{thm:gdp}}
\label{sec:thm1}

We consider the R\'enyi divergence between two discrete Gaussian distributions differing in the mean value.

\begin{proof}
\begin{equation*}
\begin{split}
    D_\alpha&(N_\mathbb{L}(0, \sigma^2)\|N_\mathbb{L}(\mu, \sigma^2))\\ &\overset{(1)}{=} \frac{1}{\alpha-1}\log\sum_{\mathbb{L}}\frac{1}{\sum_{\mathbb{L}}\exp(-(x-\mu)^2/(2\sigma^2))}\exp(-\alpha x^2/(2\sigma^2))\cdot \exp(-(1-\alpha) (x-\mu)^2/(2\sigma^2))\\
    &= \frac{1}{\alpha-1}\log\frac{1}{\sum_{\mathbb{L}}\exp(-(x-\mu)^2/(2\sigma^2))}\sum_{\mathbb{L}}\exp((-x^2+2(1-\alpha)\mu x-(1-\alpha)\mu^2)/(2\sigma^2))\\
    &\overset{(2)}{\leq}\frac{1}{\alpha-1}\log\{\exp((\alpha^2-\alpha)\mu^2/(2\sigma^2))\}\\
    &=\alpha\mu^2/(2\sigma^2)
\end{split}
\end{equation*}
\end{proof}

(1) Because $range(f)\subseteq \mathbb{L}$, we only consider $\mu\in\mathbb{L}$. Thus the denominator of $N_\mathbb{L}(0,\sigma^2)$ and $N_\mathbb{L}(\mu,\sigma^2)$ cancels out as $\sum_{\mathbb{L}}\exp(-(x-\mu)^2/(2\sigma^2))$ is periodic. (2) $\frac{\sum_{\mathbb{L}}\exp(-(x-(1-\alpha)\mu)^2/(2\sigma^2))}{\sum_{\mathbb{L}}\exp(-(x-\mu)^2/(2\sigma^2))}=\frac{\sqrt{\pi}\vartheta((1-\alpha)\pi\mu,e^{-\pi^2})}{\vartheta(0, \frac{1}{e})}\leq 1$ where $\vartheta$ is the Jacobi theta function~\citep{wikitheta}.
\section{Proof for Theorem~\ref{thm:sensitivity}}
\label{sec:thm4}

\begin{proof}
The $l_2$ sensitivity of $\delta$ is naturally bounded by $2D$.
The rotation does not change the sensitivity.
The $k$-level quantization might expand the space as shown in Figure~\ref{fig:sens_quan}.
An upper bound on the radius of the red circle is $D+\sqrt{d}\frac{D}{k-1}$.
When we take $k=\sqrt{d}-1$, it reduces to $2D$.
Thus, the upper bound on the sensitivity is $4D$.
\end{proof}

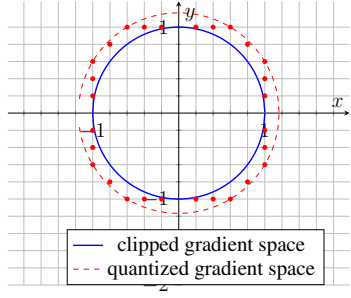
\begin{figure}[htbp]
\tikzset{StyleOne/.style={samples=200, thick}}
\tikzset{StyleTwo/.style={samples=200, dashed}}
\centering
\resizebox{0.33\textwidth}{!}{%
    \begin{tikzpicture}
    \begin{axis}[
        axis lines = middle,
        xlabel = $x$,
        xmax=1.2,
        xmin=-1.2,
        minor x tick num={4},
        ylabel = {$y$},
        ymax=1.3,
        ymin=-2,
        minor y tick num={4},
        grid=both,
        axis equal=true,
        legend style={
            at={(axis cs:-1.3,-2)},
            anchor=south west
        },
    ]
    
    % \addplot [domain=-4:4, My Style, red] {2*x+1};
    % \addlegendentry{$y=2x+1$}
    
    \addplot [domain=-1:1, StyleOne, blue] {sqrt(1-x^2)};
    \addplot [domain=-1.16619037897:1.16619037897, StyleTwo, red] {sqrt(1.36-x^2)};
    \addplot [domain=-1:1, StyleOne, blue] {-sqrt(1-x^2)};
    \addplot [domain=-1.16619037897:1.16619037897, StyleTwo, red] {-sqrt(1.36-x^2)};
    \addlegendentry{clipped gradient space}
    \addlegendentry{quantized gradient space}
    
    \node[circle,fill,red,inner sep=1pt] at (axis cs:0.2,1) {};
    \node[circle,fill,red,inner sep=1pt] at (axis cs:0.4,1) {};
    \node[circle,fill,red,inner sep=1pt] at (axis cs:0.6,1) {};
    \node[circle,fill,red,inner sep=1pt] at (axis cs:0.8,0.8) {};
    \node[circle,fill,red,inner sep=1pt] at (axis cs:1,0.6) {};
    \node[circle,fill,red,inner sep=1pt] at (axis cs:1,0.4) {};
    \node[circle,fill,red,inner sep=1pt] at (axis cs:1,0.2) {};
    
    \node[circle,fill,red,inner sep=1pt] at (axis cs:-0.2,1) {};
    \node[circle,fill,red,inner sep=1pt] at (axis cs:-0.4,1) {};
    \node[circle,fill,red,inner sep=1pt] at (axis cs:-0.6,1) {};
    \node[circle,fill,red,inner sep=1pt] at (axis cs:-0.8,0.8) {};
    \node[circle,fill,red,inner sep=1pt] at (axis cs:-1,0.6) {};
    \node[circle,fill,red,inner sep=1pt] at (axis cs:-1,0.4) {};
    \node[circle,fill,red,inner sep=1pt] at (axis cs:-1,0.2) {};
    
    \node[circle,fill,red,inner sep=1pt] at (axis cs:0.2,-1) {};
    \node[circle,fill,red,inner sep=1pt] at (axis cs:0.4,-1) {};
    \node[circle,fill,red,inner sep=1pt] at (axis cs:0.6,-1) {};
    \node[circle,fill,red,inner sep=1pt] at (axis cs:0.8,-0.8) {};
    \node[circle,fill,red,inner sep=1pt] at (axis cs:1,-0.6) {};
    \node[circle,fill,red,inner sep=1pt] at (axis cs:1,-0.4) {};
    \node[circle,fill,red,inner sep=1pt] at (axis cs:1,-0.2) {};
    
    \node[circle,fill,red,inner sep=1pt] at (axis cs:-0.2,-1) {};
    \node[circle,fill,red,inner sep=1pt] at (axis cs:-0.4,-1) {};
    \node[circle,fill,red,inner sep=1pt] at (axis cs:-0.6,-1) {};
    \node[circle,fill,red,inner sep=1pt] at (axis cs:-0.8,-0.8) {};
    \node[circle,fill,red,inner sep=1pt] at (axis cs:-1,-0.6) {};
    \node[circle,fill,red,inner sep=1pt] at (axis cs:-1,-0.4) {};
    \node[circle,fill,red,inner sep=1pt] at (axis cs:-1,-0.2) {};
    \end{axis}
    \end{tikzpicture}
}
\caption{Sensitivity after Quantization.}
\label{fig:sens_quan}
\end{figure}
\section{Proof for Theorem~\ref{thm:after}}
\label{sec:thm6}

We first prove the following lemma.
\begin{lemma}
\begin{equation*}
    \sum\phi_q(x) = \phi_q(\sum x)
\end{equation*}
\label{lem:phi}
\end{lemma}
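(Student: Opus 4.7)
The plan is to treat $\phi_q$ as a canonical-representative map for the quotient group $\mathbb{R}/(cq\mathbb{Z})$, where $c := \frac{2\diff^{\max}}{k-1}$, and then exploit the fact that this map is additive after one final reduction.

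First I would rewrite $\phi_q$ in terms of a scaled signed modular reduction. Setting $y = x/c$, the definition gives $\phi_q(x) = c \cdot \psi_q(y)$ where $\psi_q(y) := ((y + (q-1)/2) \bmod q) - (q-1)/2$ is the unique element of $\{-(q-1)/2,\ldots,(q-1)/2\}$ congruent to $y$ modulo $q$. The assumption that $q$ is odd is precisely what makes $(q-1)/2$ an integer so that the offset is compatible with the lattice $\mathbb{L}$. From this form it is immediate that $\phi_q(x) - x \in cq\mathbb{Z}$, i.e., $\phi_q(x) \equiv x \pmod{cq}$, and also that $\phi_q$ is idempotent: $\phi_q(\phi_q(x)) = \phi_q(x)$.

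Second, I would sum the congruences: for any $x_1,\ldots,x_m$ we get $\sum_i \phi_q(x_i) \equiv \sum_i x_i \pmod{cq}$, because congruence modulo $cq$ respects addition. Since $\phi_q$ depends only on its argument modulo $cq$, applying $\phi_q$ to both sides yields $\phi_q\bigl(\sum_i \phi_q(x_i)\bigr) = \phi_q\bigl(\sum_i x_i\bigr)$. This is the identity that the algorithm actually needs for the pairwise masks in line 16 of Algorithm~\ref{alg:protocol} to cancel when the server forms $\sum_{i\in S}\tilde{\diff}_t^{(i)}$, and is the sense in which the lemma's shorthand ``$\sum\phi_q(x) = \phi_q(\sum x)$'' should be read (equality modulo $cq$, equivalently equality after a tacit outer $\phi_q$).

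There is no real obstacle here; the whole claim is just the statement that the natural projection $\mathbb{R} \to \mathbb{R}/(cq\mathbb{Z})$ is a group homomorphism with $\phi_q$ serving as a chosen section into the symmetric fundamental domain. The only mild pitfall is handling the half-integer-looking offset $(q-1)/2$ correctly so that both the shift and the modulus land on lattice points of $\mathbb{L}$, which is exactly why the algorithm stipulates $q$ to be odd; once that is settled the proof reduces to one line of modular arithmetic.
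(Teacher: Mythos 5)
Your proof is correct, and it is in fact more careful than the paper's own argument. The paper's proof consists of a single step that pulls the summation inside the modular reduction, asserting $\sum_i \phi_q(x_i)=\phi_q(\sum_i x_i)$ as an exact identity of real numbers; that identity is false in general. Take $c=\frac{2\diff^{\max}}{k-1}=1$, $q=5$, and $x_1=x_2=2$: each $\phi_q(x_i)=2$, so the left side is $4$, while the right side is $\phi_5(4)=-1$. The problem is that each $\phi_q(x_i)$ lies in the symmetric fundamental domain but their sum need not, so a further reduction is required. What is true---and what you prove---is the congruence $\sum_i\phi_q(x_i)\equiv\sum_i x_i\pmod{cq}$ together with the consequence $\phi_q\bigl(\sum_i\phi_q(x_i)\bigr)=\phi_q\bigl(\sum_i x_i\bigr)$. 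Your route through the quotient group $\mathbb{L}/(cq\mathbb{Z})$, the observation that $\phi_q(x)\equiv x\pmod{cq}$ with $\phi_q$ idempotent, and the remark that oddness of $q$ is what keeps the offset $(q-1)/2$ on the lattice, is the standard and correct way to justify the mask cancellation in line 16 of Algorithm~\ref{alg:protocol}. The one caveat worth making explicit is that your corrected identity only recovers the aggregate after the server applies one final $\phi_q$ (equivalently, sums inside the group); Algorithm~\ref{alg:protocol} line 19 and the proof of Theorem~\ref{thm:after} omit that final reduction and therefore inherit exactly the off-by-a-multiple-of-$cq$ gap that your version repairs.
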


\begin{proof}
\begin{equation*}
\begin{split}
    \sum\phi_q(x)&=\sum \frac{2\diff^{\max}}{k-1}((\frac{k-1}{2\diff^{\max}}x+\frac{q-1}{2})\mod q - \frac{q-1}{2})\\
    &=\frac{2\diff^{\max}}{k-1}((\frac{k-1}{2\diff^{\max}}\sum x+\frac{q-1}{2})\mod q - \frac{q-1}{2})\\
    & = \phi_q(\sum x)\\
\end{split}
\end{equation*}
\end{proof}

Then we prove Theorem~\ref{thm:after} as follows.

\begin{proof}
Given Lemma~\ref{lem:phi}, 
\begin{equation*}
\begin{split}
    \tilde{\diff}_t & = \frac{1}{k}\sum_{i\in S}\tilde{\diff}_t^{(i)}\\
    &= \frac{1}{k}\sum_{i\in S}\phi_q(\phi_q(\tilde{\diff}_{t,dp}^{(i)})+\sum_{j\neq i,j\in S}\boldsymbol{u}_{ij}-\sum_{j\neq i,j\in S}\boldsymbol{u}_{ji})\\
    &= \frac{1}{k}\phi_q(\sum_{i\in S}(\phi_q(\tilde{\diff}_{t,dp}^{(i)})+\sum_{j\neq i,j\in S}\boldsymbol{u}_{ij}-\sum_{j\neq i,j\in S}\boldsymbol{u}_{ji}))\\
    &= \frac{1}{k}\phi_q(\sum_{i\in S}\tilde{\diff}_{t,dp}^{(i)})\\
\end{split}.
\end{equation*}

The expression $\sum_{i\in S}\tilde{\diff}_{t,dp}^{(i)}$ forms a centralized discrete Gaussian mechanism and according to the post-processing theorem, the same RDP guarantee still holds.
\end{proof}
\section{Proof for Theorem~\ref{thm:mse}}
\label{sec:thm7}

\begin{proof}[Proof Sketch]

Before starting the proof, we introduce two lemmas for the proof.

\begin{lemma}[Proposition 19 from \cite{canonne2020discrete}]
For all $\sigma\in\mathbb{R}$ with $\sigma>0$,
\begin{equation*}
    \mathbb{V}[N_{\mathbb{Z}}(0,\sigma^2)]\leq\sigma^2(1-\frac{4\pi^2\sigma^2}{e^{4\pi^2\sigma^2-1}})<\sigma^2
\end{equation*}
Moreover, if $\sigma^2\leq \frac{1}{3}$, then $\mathbb{V}[N_{\mathbb{Z}}(0, \sigma^2)]\leq 3\cdot e^{-\frac{1}{2\sigma^2}}$
\label{lem:var}
\end{lemma}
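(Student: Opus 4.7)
The plan is to convert the sum defining the second moment into its Fourier dual via Poisson summation, obtaining an identity that makes both the strict inequality and the quantitative bound transparent; the small-$\sigma$ regime $\sigma^2\le 1/3$ is then handled separately by a primal-side estimate.

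First, since $N_{\mathbb{Z}}(0,\sigma^2)$ is symmetric we have $\mathbb{V}[X]=\mathbb{E}[X^2]$, and a direct computation gives the thermodynamic identity $\mathbb{E}[X^2]=2\sigma^4\,\partial_{\sigma^2}\log Z(\sigma^2)$, where $Z(\sigma^2):=\sum_{x\in\mathbb{Z}}e^{-x^2/(2\sigma^2)}$ is the normalizing constant. Applying Poisson summation to $f(x)=e^{-x^2/(2\sigma^2)}$, whose continuous Fourier transform is $\hat f(\xi)=\sigma\sqrt{2\pi}\,e^{-2\pi^2\sigma^2\xi^2}$, yields the dual expression $Z(\sigma^2)=\sigma\sqrt{2\pi}\sum_{k\in\mathbb{Z}}e^{-2\pi^2\sigma^2 k^2}$. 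Differentiating $\log Z$ with respect to $\sigma^2$ and substituting back produces the clean identity
\[
\mathbb{V}[N_{\mathbb{Z}}(0,\sigma^2)]\;=\;\sigma^2\;-\;4\pi^2\sigma^4\cdot\frac{\sum_{k\in\mathbb{Z}}k^2 e^{-2\pi^2\sigma^2 k^2}}{\sum_{k\in\mathbb{Z}}e^{-2\pi^2\sigma^2 k^2}}.
\]
The strict inequality $\mathbb{V}[N_{\mathbb{Z}}(0,\sigma^2)]<\sigma^2$ is immediate from this, since the $k=\pm 1$ terms make the correction strictly positive whenever $\sigma>0$.

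For the quantitative bound, set $t:=e^{-2\pi^2\sigma^2}\in(0,1)$, so the correction equals $4\pi^2\sigma^4\cdot R(t)$ with $R(t)=\tfrac{2\sum_{k\ge 1}k^2 t^{k^2}}{1+2\sum_{k\ge 1}t^{k^2}}$, and the target reduces to $R(t)\ge t^2/(1-t^2)=1/(e^{4\pi^2\sigma^2}-1)$. Clearing denominators (both sides positive), this is equivalent to the power-series inequality $\sum_{k\ge 1}\bigl(2k^2 t^{k^2}-2(k^2+1)t^{k^2+2}\bigr)\ge t^2$. I would verify this by isolating the leading $2t-t^2$ contribution (positive on $(0,1)$) and then pairing each positive theta coefficient $2k^2 t^{k^2}$ with its shifted negative partner $2(k^2+1)t^{k^2+2}$ so that successive partial remainders stay nonnegative; this is the standard $\vartheta$-function estimate used in Canonne--Kamath--Steinke (Proposition 19).

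For the ``moreover'' part under $\sigma^2\le 1/3$, I would drop the dual representation and bound the variance directly. Using $Z\ge 1$ (the $x=0$ term alone contributes $1$) and factoring out $e^{-1/(2\sigma^2)}$, we have
\[
\mathbb{V}[X]\;\le\;2e^{-1/(2\sigma^2)}\sum_{x\ge 1}x^2 e^{-(x^2-1)/(2\sigma^2)}.
\]
Under $\sigma^2\le 1/3$, the exponent satisfies $(x^2-1)/(2\sigma^2)\ge 3(x^2-1)/2$, so the tail for $x\ge 2$ is dominated by a rapidly decaying geometric series; a crude bound on the residual sum yields $\mathbb{V}[X]\le 3e^{-1/(2\sigma^2)}$, with the constant $3$ absorbing the geometric tail. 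The main obstacle is the polynomial inequality underlying $R(t)\ge t^2/(1-t^2)$ in the regime where $t$ is close to $1$ (i.e., small $\sigma$): the positive and negative theta coefficients nearly cancel, and one must group partial sums carefully to see that the remainder remains nonnegative. The rest of the argument -- the Poisson-summation identity, logarithmic differentiation, and the primal tail bound for small $\sigma$ -- is standard.
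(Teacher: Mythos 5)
The paper does not actually prove this lemma---it imports it verbatim as Proposition~19 of \cite{canonne2020discrete}---so the comparison here is between your attempt and the cited source's argument. Your skeleton is the right one and matches that source: the identity $\mathbb{E}[X^2]=2\sigma^4\partial_{\sigma^2}\log Z$, the Poisson-summation dual $Z=\sigma\sqrt{2\pi}\sum_k e^{-2\pi^2\sigma^2k^2}$, and the resulting formula $\mathbb{V}=\sigma^2-4\pi^2\sigma^4 R$ with $R=\sum_k k^2e^{-2\pi^2\sigma^2k^2}/\sum_k e^{-2\pi^2\sigma^2k^2}$ are all correct, the strict bound $<\sigma^2$ follows immediately, and your primal estimate for the ``moreover'' clause goes through (with $\sigma^2\le 1/3$ the residual factor $\sum_{x\ge1}x^2e^{-3(x^2-1)/2}\approx 1.045$, so $2\cdot 1.045<3$). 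You also correctly read the exponent as $e^{4\pi^2\sigma^2}-1$; the statement as printed in the paper ($e^{4\pi^2\sigma^2-1}$) is a typo.

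The genuine gap is the step you flag yourself: the inequality $R(t)\ge t^2/(1-t^2)$ for $t=e^{-2\pi^2\sigma^2}\in(0,1)$, which is the entire quantitative content of the upper bound. Your proposed mechanism---pair $2k^2t^{k^2}$ with $2(k^2+1)t^{k^2+2}$ and argue partial remainders stay nonnegative---fails already at $k=1$: the pair contributes $2t-4t^3$, which is negative for $t>1/\sqrt{2}$, and the same happens for every $k$ once $t^2>k^2/(k^2+1)$. The inequality is true, but the margin degenerates as $t\to1$ (both sides behave like $1/(-2\log t)$ and differ only in the constant term), so no termwise or blockwise positivity argument of the kind you sketch can work; one genuinely needs the global structure. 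The clean route, which is what Canonne--Kamath--Steinke do, is to recognize $R(t)=\mathbb{V}[N_{\mathbb{Z}}(0,s^2)]$ with $s^2=1/(4\pi^2\sigma^2)$ (the dual discrete Gaussian) and invoke a separately proven lower bound $\mathbb{V}[N_{\mathbb{Z}}(0,s^2)]\ge 1/(e^{1/s^2}-1)$; as written, your proposal asserts this step rather than proving it, so the main inequality of the lemma is not yet established.
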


\begin{lemma}[Proposition 23 from \cite{canonne2020discrete}]
For all $m\in\mathbb{Z}$ with $m\geq 1$ and all $\sigma\in\mathbb{R}$ with $\sigma>0$,
\begin{equation*}
    \mathbb{P}_{X\sim N_\mathbb{Z}(0, \sigma^2)}[X\geq m] \leq \mathbb{P}_{X\sim N(0, \sigma^2)}[X\geq m-1].
\end{equation*}
Moreover, if $\sigma\geq 1/\sqrt{2\pi}$, we have 
\begin{equation*}
    \mathbb{P}_{X\sim N_{\mathbb{Z}}(0,\sigma^2)}[X\geq m]\geq \frac{1}{1+3e^{-2\pi^2\sigma^2}}\mathbb{P}_{X\sim N(0,\sigma^2)}[X\geq m]
\end{equation*}
\label{lem:tail_bound}
\end{lemma}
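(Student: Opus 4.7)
My plan is to decompose the MSE according to whether the modular-reduction step $\phi_q$ nontrivially wraps the aggregated values, and then bound each piece separately using the two preparatory lemmas that were stated at the end of the excerpt.

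First I would set up the decomposition. Let $W$ be the event that $\phi_q$ actually wraps on the aggregate, i.e.\ that the sum $\sum_{i\in S}(\tilde{\diff}^{(i)}_{t,quantized}+\boldsymbol{\nu}_i/(\gamma n))$ falls outside the fundamental domain of the quotient group implicit in $\phi_q$. Since by Lemma~\ref{lem:phi} (proved in the appendix for Theorem~\ref{thm:after}) the outer $\phi_q$ commutes with the sum and the pairwise masks cancel, on $W^c$ the server's estimate equals exactly $\frac{1}{\gamma n}\sum_i \tilde{\diff}^{(i)}_{t,quantized}$ plus the averaged discrete-Gaussian noise, so the modular arithmetic is invisible. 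Writing
\[
\mathcal{E}=\mathbb{E}\!\left[\|\tilde{\diff}-\bar{\diff}\|^2\,\mathbb{1}_{W^c}\right]+\mathbb{E}\!\left[\|\tilde{\diff}-\bar{\diff}\|^2\,\mathbb{1}_{W}\right],
\]
each summand is handled by a dedicated argument, and the two terms of the claim correspond to these.

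For the $W^c$ piece I would invoke the standard analysis of stochastic $k$-level quantization $\pi_k^{(rot)}$: the quantization is unbiased with per-coordinate variance bounded by $\tfrac{1}{4}\bigl(\tfrac{2\diff^{\max}}{k-1}\bigr)^2$, the random rotation is orthogonal (does not change $\ell_2$ norms, hence does not change MSE), and averaging over $\gamma n$ independent clients scales by $1/n$. The noise contribution in real units is $\sigma^2\bigl(\tfrac{2\diff^{\max}}{k-1}\bigr)^2$ per coordinate by Lemma~\ref{lem:var} (the discrete Gaussian has variance at most $\sigma^2$ on the integer lattice, and here the lattice spacing is $\tfrac{2\diff^{\max}}{k-1}$), which after the $\gamma n$ scaling contributes the $\sigma^2/(\gamma^2 n^2)$ summand in the parenthesis. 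Summing the two independent contributions across $d$ coordinates yields the factor $\frac{4d(\diff^{\max})^2}{n(k-1)^2}\bigl(\tfrac14+\tfrac{\sigma^2}{\gamma^2 n^2}\bigr)$. Next, to convert the conditional bound into an unconditional one I upper-bound $\mathbb{P}[W^c]=1-\mathbb{P}[W]$ by using the \emph{lower} tail bound in Lemma~\ref{lem:tail_bound}: since $\sigma\geq 1/\sqrt{2\pi}$, the aggregate noise (which drives the chance of wrapping past the threshold $\sim nq$) satisfies $\mathbb{P}[W]\geq \frac{1}{1+3e^{-2\pi^2\sigma^2}}(1-\Phi(nq))$, producing exactly the $1-\frac{1}{1+3e^{-2\pi^2\sigma^2}}(1-\Phi(nq))$ prefactor.

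For the $W$ piece I use the trivial bound that every post-$\phi_q$ coordinate lies in a symmetric window of radius $q$, so the error per coordinate is at most $q$, giving the $q^2$ factor; then I upper-bound $\mathbb{P}[W]$ by the \emph{first} half of Lemma~\ref{lem:tail_bound}. The shift by $k+1$ (rather than $q$) reflects that wrapping requires the noise to drive the already-quantized sum, whose magnitude is at most $\sim k$ after rotation and clipping, past the boundary, plus the unit shift $m\mapsto m-1$ that appears inside the lemma; the tail of the scaled Gaussian is then $1-\Phi(n(q-k-1))$. Assembling the two pieces gives the stated inequality. The main obstacle I anticipate is that independent discrete Gaussians do not sum to a discrete Gaussian, so controlling the tail of $\sum_{i\in S}\boldsymbol{\nu}_i$ cannot be done by a one-shot application of Lemma~\ref{lem:tail_bound}; I would route around this either by treating the shared-seed formulation of Algorithm~\ref{alg:protocol} (where the summed noise is a single discrete Gaussian with rescaled variance, making the lemma directly applicable) or by passing to a Gaussian comparison via Lemma~\ref{lem:var} and a union bound over coordinates. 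The second subtlety is that the random rotation preserves $\ell_2$ but not $\ell_\infty$, so I must be careful to apply the coordinate-wise quantization and tail bounds only \emph{after} noting that MSE is rotation-invariant, while reserving any $\ell_\infty$-style control of $\diff^{\max}$ for the Remark~1 calibration discussion.
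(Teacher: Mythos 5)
Your proposal does not address the statement you were asked to prove. The target is Lemma~\ref{lem:tail_bound} itself --- a self-contained comparison between the tail of the discrete Gaussian $N_{\mathbb{Z}}(0,\sigma^2)$ and the tail of the continuous Gaussian $N(0,\sigma^2)$ --- but what you have written is a proof sketch of Theorem~\ref{thm:mse}, the MSE bound, in which Lemma~\ref{lem:tail_bound} is used as a black box: you explicitly invoke both halves of it to control $\mathbb{P}[W]$ and $\mathbb{P}[W^c]$ in your wrap/no-wrap decomposition. In other words, the proposal assumes the very statement it is supposed to establish. Note also that the paper does not prove this lemma from scratch; it imports it verbatim as Proposition~23 of \cite{canonne2020discrete}, so no MSE decomposition, no wrapping event, and no quantization analysis belong anywhere in the intended argument.

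A proof of the actual statement has to work directly with the ratio $\sum_{x\in\mathbb{Z},\,x\geq m}e^{-x^2/(2\sigma^2)}\big/\sum_{x\in\mathbb{Z}}e^{-x^2/(2\sigma^2)}$. The first inequality follows from the monotonicity of $t\mapsto e^{-t^2/(2\sigma^2)}$ on $[0,\infty)$, which gives $e^{-x^2/(2\sigma^2)}\leq\int_{x-1}^{x}e^{-t^2/(2\sigma^2)}\,dt$ for each integer $x\geq m\geq 1$ and hence bounds the numerator by $\int_{m-1}^{\infty}e^{-t^2/(2\sigma^2)}\,dt$, combined with the Poisson-summation fact that the discrete normalizer $\sum_{x\in\mathbb{Z}}e^{-x^2/(2\sigma^2)}$ is at least the continuous one, $\sqrt{2\pi}\,\sigma$. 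The second inequality (for $\sigma\geq 1/\sqrt{2\pi}$) requires the reverse comparison $\sum_{x\geq m}e^{-x^2/(2\sigma^2)}\geq\int_{m}^{\infty}e^{-t^2/(2\sigma^2)}\,dt$ together with an upper bound on the normalizer of the form $\sqrt{2\pi}\,\sigma\,(1+3e^{-2\pi^2\sigma^2})$; this Jacobi-theta-function estimate is the source of the stated prefactor and is the same kind of normalizer control that appears in the paper's proof of Theorem~\ref{thm:gdp}. None of these ingredients appear in your write-up, so as a proof of Lemma~\ref{lem:tail_bound} it is entirely missing.
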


Now we start our proof. 
MSE can be rewritten in the following format.
\begin{equation*}
\begin{split}
    \mathbb{E}[\|\hat{\bar{X}}-\bar{X}\|_2^2|]&=\frac{1}{n^2}\sum_{j=1}^d\sum_{i=1}^n\mathbb{E}[(\hat{\bar{X}}_i(j)-X_i(j))^2]\\
\end{split}
\end{equation*}

For each $\star=\mathbb{E}[(\hat{\bar{X}}_i(j)-X_i(j))^2]$
%\ruoxi{$\star = \frac{1}{n^2}\sum_d\sum_n\mathbb{E}[(\hat{\bar{X}}_i(j)-X_i(j))^2]$?}
, we need to consider two cases. 
If no overflow happens, due to Lemma~\ref{lem:var}
\begin{equation*}
    \star_{\neg o} 
    \leq \mathbb{E}[(\frac{2X^{\max}}{k-1})^2(\mathbb{V}(Ber(p_i(j)))+\mathbb{V}(N_{\mathbb{L}}(\sigma)))]
    \leq \frac{4(X^{\max})^2}{(k-1)^2}(\frac{1}{4}+\frac{\sigma^2}{\gamma^2n^2})
\end{equation*}
% \ruoxi{with random rotation, there is small probability that the l infinity norm bigger than $X^{max}$}
% %
% \lun{this is true in a general case; we are taking $X_{\max}=D$ now so this should not be a problem.}
%
If overflow happens, trivially we have $\star_o \leq k^2$.
Thus, we have
\begin{equation}
    \star = \mathbb{P}[\neg o]\cdot\star_{\neg o} + \mathbb{P}[o]\cdot\star_o \leq \mathbb{P}_{X\sim N_{\mathbb{L}}(\sigma^2)}[X\leq q]\cdot\star_{\neg o} + \mathbb{P}_{X\sim N_{\mathbb{L}}(\sigma^2)}[X\geq q-k]\cdot\star_o
\end{equation}
%
% \ruoxi{$P[\neg o] = P[X+N_\mathbb{L}\leq q]$?}
With Lemma~\ref{lem:tail_bound}, we can bound the probabilities and get the final MSE result in the theorem.

\end{proof}
\section{Other Evaluation}
\label{sec:other_eval}

\subsection{Influence of Noise Scale}
\label{sec:sigma}
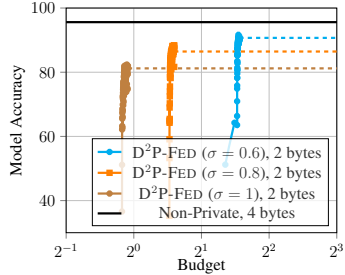
\begin{figure}[htbp]
    \centering
    \resizebox{0.33\textwidth}{!}{\begin{tikzpicture}
\begin{axis}[
  set layers,
  grid=major,
  ymin=30, ymax=100, xmin=0.5, xmax=8,
  xmode=log, log basis x={2},
  ytick align=outside, ytick pos=left,
  xtick align=outside, xtick pos=left,
  xlabel=Budget,
  ylabel={Model Accuracy},
  legend pos=south east,
  legend style={fill opacity=0.8, draw opacity=1, text opacity=1},
]

\addplot+[
  cyan, line width=1.6pt, mark options={scale=0.75},
  smooth,
  % dashed,
  error bars/.cd, 
    y fixed,
    y dir=both, 
    y explicit
] table [x=x, y=y, col sep=comma] {data/privacy_utility/INFIMNIST/disgauss0_36.txt};
\addlegendentry{\sys ($\sigma=0.6$), 2 bytes}
\addplot [mark=none, cyan, dashed, line width=1.6pt, domain=2.950481:32,forget plot] {90.700005};

\addplot+[
  orange, line width=1.6pt, mark options={scale=0.75},
  smooth,
  % dashed,
  error bars/.cd, 
    y fixed,
    y dir=both, 
    y explicit
] table [x=x, y=y, col sep=comma] {data/privacy_utility/INFIMNIST/disgauss0_64.txt};
\addlegendentry{\sys ($\sigma=0.8$), 2 bytes}
\addplot [mark=none, orange, dashed, line width=1.6pt, domain=1.513062:16,forget plot] {86.45};

\addplot+[
  brown, line width=1.6pt, mark options={scale=0.75},
  smooth,
  % dashed,
  error bars/.cd, 
    y fixed,
    y dir=both, 
    y explicit
] table [x=x, y=y, col sep=comma] {data/privacy_utility/INFIMNIST/disgauss1.txt};
\addlegendentry{\sys ($\sigma=1$), 2 bytes}
\addplot [mark=none, brown, dashed, line width=1.6pt, domain=0.942049:16,forget plot] {81.2};

\addplot [mark=none, black, line width=1.6pt,domain=0.5:32] {95.6};
\addlegendentry{Non-Private, 4 bytes}

\end{axis}
\end{tikzpicture}}
    \caption{\sys under different $\sigma$.}
    \label{fig:sigma}
\end{figure}
The hyper-parameter of the most vital interest in \sys is the scale of the noise.
To understand the effect of the noise scale, we evaluated \sys on INFIMNIST, with 3 different choices of noise scale as shown in Figure~\ref{fig:sigma}.
It is no surprise that the higher the noise scale, the smaller the privacy budget and the lower the model accuracy.
This also illustrates the claim we have in Section~\ref{sec:acc_priv} that \sys can also have relatively good performance in the high privacy region at the cost of the model accuracy.

\subsection{Influence of Heterogeneous Data Distribution}
It is well known that data is sometimes heterogeneously distributed among clients in a federated learning system.
Thus, to better understand \sys's behavior under heterogeneous data distribution, we simulated heterogenenous data distribution by distributing the INFIMNIST data according to the classes and evaluated \sys on these clients.
The results are shown in Figure~\ref{fig:homo_hetero} and we can see that under heterogeneous data distribution the model accuracy drops by more than 10\%.
This complies with the previous empirical results and there have been a line of researches focusing on addressing the issue~\citep{yurochkin2019statistical,yurochkin2019bayesian,wang2020federated}.
Although orthogonal to this paper, we deem it as an interesting open problem how to integrate these works with \sys.

% \lun{change the color encoding}
\pgfplotsset{every axis plot/.append style={line width=3pt}}
\begin{figure}[htbp]
    \centering
    \resizebox{0.33\textwidth}{!}{\begin{tikzpicture}
\begin{axis}[
  set layers,
  grid=major,
  ymin=0, ymax=100, xmin=3.5, xmax=5.5,
  % xmode=log, log basis x={2},
  ytick align=outside, ytick pos=left,
  xtick align=outside, xtick pos=left,
  xlabel=Budget,
  ylabel={Model Accuracy},
  legend pos=south east,
  ]

% \addplot+[
%   red, mark=none, line width=1.6pt,%mark options={scale=0.75},
%   smooth, 
%   error bars/.cd, 
%     y fixed,
%     y dir=both, 
%     y explicit
% ] table [x=x, y=y, col sep=comma] {data/privacy_utility/INFIMNIST/binom2.txt};
% \addlegendentry{Binom, 2 bytes}

\addplot+[
  each nth point=3,
  red, mark=none, mark options={scale=0.75},
  smooth, 
  dotted,
  error bars/.cd, 
    y fixed,
    y dir=both, 
    y explicit
] table [x=x, y=y, col sep=comma] {data/homo_hetero/hetero_binom3.txt};
\addlegendentry{cpSGD}

% \addplot+[
%   each nth point=10,
%   orange, mark=none, mark options={scale=0.75},
%   smooth, 
%   error bars/.cd, 
%     y fixed,
%     y dir=both, 
%     y explicit
% ] table [x=x, y=y, col sep=comma] {data/homo_hetero/homo0_64.txt};
% \addlegendentry{Homo, $\sigma=0.8$}
% % \addplot [mark=none, green, dashed, line width=1.6pt, domain=2.823471:16,forget plot] {90.05};

% \addplot+[
%   each nth point=10,
%   brown, mark=none, mark options={scale=0.75},
%   smooth,
%   % dashed,
%   error bars/.cd, 
%     y fixed,
%     y dir=both, 
%     y explicit
% ] table [x=x, y=y, col sep=comma] {data/homo_hetero/homo1.txt};
% \addlegendentry{Homo, $\sigma=1$}
% \addplot [mark=none, cyan, dashed, line width=1.6pt, domain=5.620295:16,forget plot] {92.65};

\addplot+[
  each nth point=3,
  cyan, mark=none, mark options={scale=0.75},
  smooth,
  dotted,
  error bars/.cd, 
    y fixed,
    y dir=both, 
    y explicit
] table [x=x, y=y, col sep=comma] {data/homo_hetero/hetero0_25.txt};
\addlegendentry{\sys}

% \addplot+[
%   each nth point=10,
%   orange, mark=none, mark options={scale=0.75},
%   smooth,
%   dashed,
%   error bars/.cd, 
%     y fixed,
%     y dir=both, 
%     y explicit
% ] table [x=x, y=y, col sep=comma] {data/homo_hetero/hetero0_64.txt};
% \addlegendentry{Hetero, $\sigma=0.8$}

% \addplot+[
%   each nth point=10,
%   brown, mark=none, mark options={scale=0.75},
%   smooth,
%   dashed,
%   error bars/.cd, 
%     y fixed,
%     y dir=both, 
%     y explicit
% ] table [x=x, y=y, col sep=comma] {data/homo_hetero/hetero1.txt};
% \addlegendentry{Hetero, $\sigma=1$}

% \addplot+[
%   brown, mark=none, line width=1.6pt,%mark options={scale=0.75},
%   smooth,
%   % dashed,
%   error bars/.cd, 
%     y fixed,
%     y dir=both, 
%     y explicit
% ] table [x=x, y=y, col sep=comma] {data/privacy_utility/INFIMNIST/disgauss1.txt};
% \addlegendentry{Dis-Gauss ($\sigma=1$), 2 bytes}
% \addplot [mark=none, brown, dashed, line width=1.6pt, domain=0.942049:16,forget plot] {81.2};

% \addplot [mark=none, black, line width=1.6pt,domain=0.5:32] {61.4};
% \addlegendentry{Non-Private, 4 bytes}

\end{axis}
\end{tikzpicture}}
    \caption{\sys under homogeneous/heterogeneous distribution.}
    \label{fig:homo_hetero}
\end{figure}
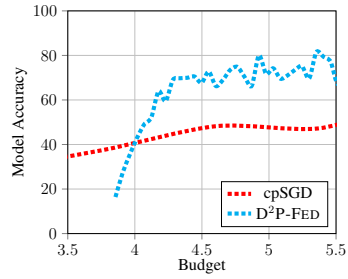

\subsection{Influence of Group Size $q$}

We also ran \sys with multiple choices of discrete group size $q$.
We observe that once the noise scale $\sigma$ is fixed, the performance is relatively robust to $q$.
%
% An interesting phenomenon is that with more communication budget, the initial accuracy even grows slower while it always achieves slightly better accuracy at last.

\pgfplotsset{every axis plot/.append style={line width=3pt}}
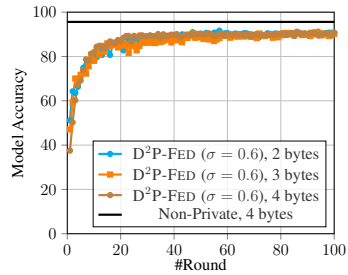
\begin{figure}[htbp]
    \centering
    \resizebox{0.33\textwidth}{!}{\begin{tikzpicture}
\begin{axis}[
  set layers,
  grid=major,
  ymin=0, ymax=100, xmin=0, xmax=100,
  % xmode=log, log basis x={2},
  ytick align=outside, ytick pos=left,
  xtick align=outside, xtick pos=left,
  xlabel=\#Round,
  ylabel={Model Accuracy},
  legend pos=south east,
  legend style={fill opacity=0.8, draw opacity=1, text opacity=1},
]

\addplot+[
  cyan, line width=1.6pt, mark options={scale=0.75},
  smooth,
  % dashed,
  error bars/.cd, 
    y fixed,
    y dir=both, 
    y explicit
] table [x expr=\coordindex+1, y=y, col sep=comma] {data/privacy_utility/INFIMNIST/disgauss0_36.txt};
\addlegendentry{\sys ($\sigma=0.6$), 2 bytes}
% \addplot [mark=none, cyan, dashed, line width=1.6pt, domain=2.950481:32,forget plot] {90.700005};

\addplot+[
  orange, line width=1.6pt, mark options={scale=0.75},
  smooth,
  % dashed,
  error bars/.cd, 
    y fixed,
    y dir=both, 
    y explicit
] table [x expr=\coordindex+1, y=y, col sep=comma] {data/privacy_utility/INFIMNIST/disgauss0_36_3bytes.txt};
\addlegendentry{\sys ($\sigma=0.6$), 3 bytes}
% \addplot [mark=none, orange, dashed, line width=1.6pt, domain=1.513062:16,forget plot] {86.45};

\addplot+[
  brown, line width=1.6pt, mark options={scale=0.75},
  smooth,
  % dashed,
  error bars/.cd, 
    y fixed,
    y dir=both, 
    y explicit
] table [x expr=\coordindex+1, y=y, col sep=comma] {data/privacy_utility/INFIMNIST/disgauss0_36_4bytes.txt};
\addlegendentry{\sys ($\sigma=0.6$), 4 bytes}
% \addplot [mark=none, brown, dashed, line width=1.6pt, domain=0.942049:16,forget plot] {81.2};

\addplot [mark=none, black, line width=1.6pt,domain=0:100] {95.6};
\addlegendentry{Non-Private, 4 bytes}

\end{axis}
\end{tikzpicture}}
    \caption{\sys under different group size $q$.}
    \label{fig:group_size}
\end{figure}

% 1. Heterogeneous data affects the performance of \sys.
% %
% 2. Interesting direction to integrate FedMA~\cite{yurochkin2019statistical,yurochkin2019bayesian,wang2020federated}

% 1. Tighter composition
% 2. Better than cpSGD under the same privacy guarantee

% \lun{talk about why our result is different from the original experiments in the cpSGD paper.}

% \subsection{Influence of Quantization Level}

% \begin{wrapfigure}{r}{\figwidth}
%     \centering
%     \resizebox{\figwidth}{!}{\input{figure/quan}}
%     \caption{\sys under different quantization levels.}
%     \label{fig:quan}
% \end{wrapfigure}

\end{document}